    \newtheorem{lemma}{Lemma}[section]
    \newtheorem{proposition}{Proposition}[section]
    \newtheorem{theorem}{Theorem}[section]
    \newtheoremstyle{TheoremNum}
        {\topsep}{\topsep}              
        {\itshape}                      
        {}                              
        {\bfseries}                     
        {.}                             
        { }                             
        {\thmname{#1}\thmnote{ \bfseries #3}}
    \theoremstyle{TheoremNum}
    \newtheorem{thm}{Theorem}
    \newtheorem{prop}{Proposition}
\title{The price of debiasing automatic metrics in natural language evaluation}
\author{%
Arun Tejasvi Chaganty$^*$ \and
Stephen Mussmann\thanks{\ Authors contributed equally.}  \and
Percy Liang \\
  Computer Science Department,
  Stanford University \\
  {\tt \{chaganty,mussmann,pliang\}@cs.stanford.edu}
}
\date{}
\providecommand\sX{\ensuremath{\mathcal{X}}}
\providecommand\sZ{\ensuremath{\mathcal{Z}}}
\newcommand\inv[1]{\ensuremath{\frac{1}{#1}}}
\newcommand\eqdef{\ensuremath{\stackrel{\rm def}{=}}} 
\newcommand\refeqn[1]{(\ref{eqn:#1})}
\newcommand\refeqns[2]{(\ref{eqn:#1}) and (\ref{eqn:#2})}
\newcommand\refsec[1]{Section~\ref{sec:#1}}
\newcommand\reffig[1]{Figure~\ref{fig:#1}}
\newcommand\reftab[1]{Table~\ref{tab:#1}}
\newcommand\refapp[1]{Appendix~\ref{sec:#1}}
\newcommand\refthm[1]{Theorem~\ref{thm:#1}}
\newcommand\reflem[1]{Lemma~\ref{lem:#1}}
\newcommand\refalg[1]{Algorithm~\ref{alg:#1}}
\newcommand{\E}{\ensuremath{\mathbb{E}}} 
\newcommand{\Var}{\operatorname{Var}}
\newcommand{\Cov}{\operatorname{Cov}}
\newcommand\DE{\text{DE}}
\newcommand\musimple{\hat \mu_\text{mean}}
\newcommand\mucontrol{\hat \mu_\text{cv}}
\begin{document}
\maketitle

\begin{abstract}
For evaluating generation systems, automatic metrics such as BLEU cost nothing to run but have been shown to correlate poorly with human judgment, leading to systematic bias against certain model improvements.
On the other hand, averaging human judgments, the unbiased gold standard, is often too expensive.
In this paper, we use control variates to combine automatic metrics with human evaluation to
obtain an unbiased estimator with lower cost than human evaluation alone.
In practice, however, we obtain only a 7--13\% cost reduction on evaluating summarization and open-response question answering systems.
We then prove that our estimator is optimal: there is no unbiased estimator with lower cost.
Our theory further highlights the two fundamental bottlenecks---the automatic
metric and the prompt shown to human evaluators---both of which need to be improved to obtain greater cost savings.

\end{abstract}

\section{\label{sec:intro}Introduction}

In recent years, there has been an increasing interest in tasks that require generating natural language, including
  abstractive summarization~\citep{nallapati2016abstractive},
  open-response question answering~\citep{nguyen2016ms,kovcisky2017narrativeqa}, 
  image captioning~\citep{lin2014microsoft},
  and open-domain dialogue~\citep{lowe2017ubuntu}.
Unfortunately, the evaluation of these systems remains a thorny issue because of the diversity of possible correct responses.
As the gold standard of performing human evaluation is often too expensive,
there has been a large effort developing
automatic metrics such as BLEU~\citep{papineni02bleu}, ROUGE~\citep{lin2004rouge}, METEOR~\citep{lavie2009meteor,denkowski2014meteor} and CiDER~\citep{vedantam2015cider}.
However, these have shown to be biased, correlating poorly with human metrics across different datasets and systems~\citep{liu2016evaluate,novikova2017why}.

\begin{figure*}[ht]
  \begin{subfigure}{0.49\textwidth}
    \includegraphics[width=\textwidth]{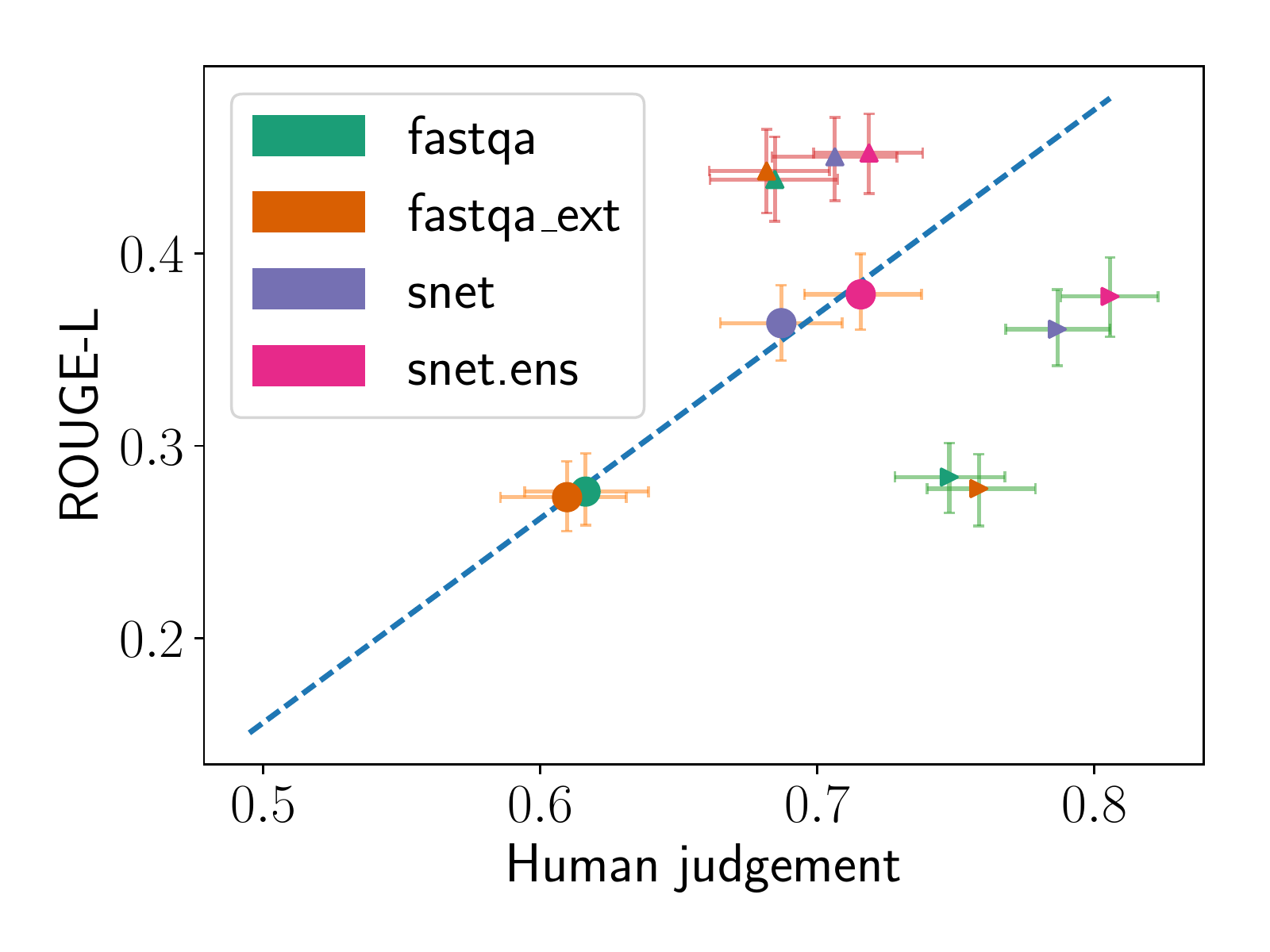}
    \caption{\label{fig:bias-msmarco-system} System-level correlation on the MS MARCO task}
  \end{subfigure}
  \hfill
  \begin{subfigure}{0.49\textwidth}
    \includegraphics[width=\textwidth]{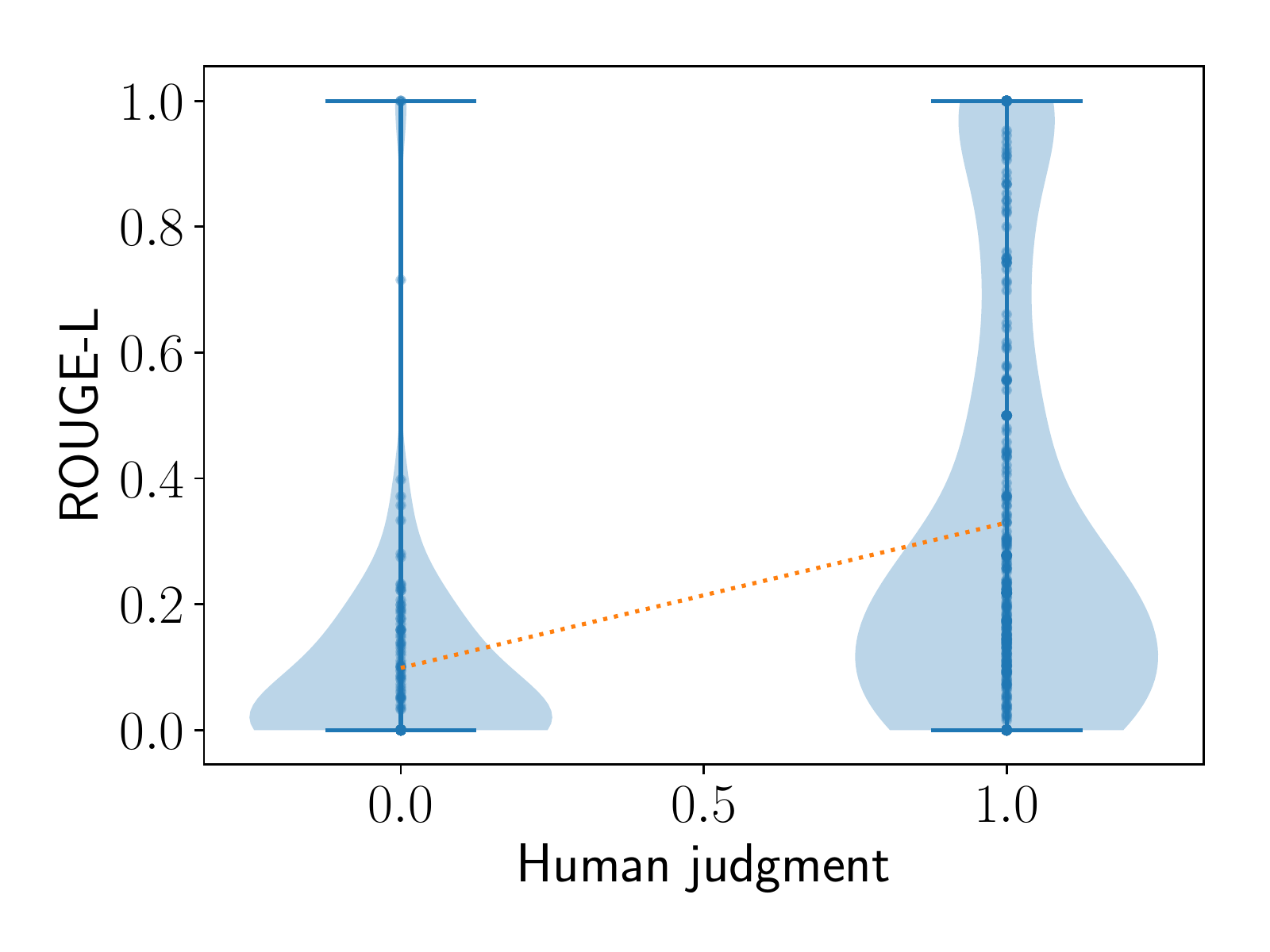}
    \caption{\label{fig:bias-msmarco-instance} Instance-level correlation for the \texttt{fastqa} system}
  \end{subfigure}
  \caption{\label{fig:bias-msmarco}
  (a) At a system-level, automatic metrics (ROUGE-L) and human judgment correlate well, but (b) the instance-level correlation plot
  (where each point is a system prediction) shows that the instance-level correlation is quite low ($\rho = 0.31$).
  As a consequence, if we try to locally improve systems to produce better answers ($\triangleright$ in (a)),
  they do not significantly improve ROUGE scores and vice versa ($\vartriangle$).
  }
\end{figure*}

Can we combine automatic metrics and human evaluation to obtain
an \emph{unbiased} estimate at \emph{lower cost} than human evaluation alone?
In this paper,
we propose a simple estimator based on control variates~\citep{ripley2009stochastic},
where we average differences between human judgments and automatic metrics
rather than averaging the human judgments alone.
Provided the two are correlated,
our estimator will have lower variance and thus reduce cost.

We prove that our estimator is \emph{optimal} in the sense
that no unbiased estimator using the same automatic metric can have lower variance.
We also analyze its data efficiency (equivalently, cost savings)---the factor reduction in number of human judgments needed to obtain the same accuracy versus naive human evaluation---and show that it depends solely on
two factors:
  (a) the annotator variance (which is a function of the human evaluation prompt) and
  (b) the correlation between human judgments and the automatic metric.
This factorization allows us to calculate typical and best-case data efficiencies and accordingly refine the evaluation prompt or automatic metric.

Finally, we evaluate our estimator on state-of-the-art systems from two tasks, summarization on the CNN/Daily Mail dataset~\cite{hermann2015read,nallapati2016abstractive}
and open-response question answering on the MS MARCOv1.0 dataset~\citep{nguyen2016ms}.
To study our estimators offline,
we preemptively collected 10,000 human judgments which cover several tasks and systems.\footnote{%
An anonymized version of this data and the annotation interfaces used can be found at \url{https://bit.ly/price-of-debiasing}.}
As predicted by the theory, we find that the data efficiency depends not only on the correlation between the human and automatic metrics,
but also on the evaluation prompt.
If the automatic metric had perfect correlation, our data efficiency would be around 3, while
  if we had noiseless human judgments, our data efficiency would be about 1.5.
In reality, the reduction in cost we obtained was only about 10\%,
suggesting that improvements in both automatic metric and evaluation prompt are needed.
As one case study in improving the latter, we show that, when compared to a Likert survey, measuring the amount of post-editing needed to fix a generated sentence
reduced the annotator variance by three-fold.

\section{\label{sec:bias} Bias in automatic evaluation}

\begin{table*}
  \begin{subtable}{\textwidth}
  \centering
  \input{examples-1.table}
  \caption{\textbf{MS MARCO.} Human annotators rated answer correctness (\texttt{AnyCorrect}) and the automatic metric used is ROUGE-L (higher is better).}
  \end{subtable} \vspace{1em} \\
  \begin{subtable}{\textwidth}
  \centering
  \input{examples-2.table}
  \caption{\textbf{CNN/Daily Mail.} Human judgment scores used are post-edit distance (\texttt{Edit}) (lower is better) and the automatic metric used is sentence vector similarity with the reference (higher is better).}
  \end{subtable}
  \caption{\label{tab:examples}
    Examples highlighting the different modes in which the automatic metric and human judgments may agree or disagree.
    On the MS MARCO task, a majority of responses from systems were actually correct but poorly scored according to ROUGE-L.
    On the CNN/Daily Mail task, a significant number of examples which are scored highly by VecSim are poorly rated by humans, and likewise many examples scored poorly by VecSim are highly rated by humans.
  }
\end{table*}

It is well understood that current automatic metrics tend to correlate poorly with human judgment at the instance-level.
For example, \citet{novikova2017why} report correlations less than $0.3$ for a large suite of word-based and grammar-based evaluation methods on a generation task.
Similarly, \citet{liu2016evaluate} find correlations less than $0.35$ for automatic metrics on a dialog generation task in one domain, but find correlations with the same metric dropped significantly to less than $0.16$ when used in another domain. 
Still, somewhat surprisingly, several automatic metrics have been found to have high \textit{system-level} correlations~\citep{novikova2017why}.
What, then, are the implications of having a low instance-level correlation?  

As a case study, consider the task of open-response question answering:
  here, a system receives a human-generated question and must \textit{generate} an answer from some given context, e.g.\ a document or several webpages.
  We collected the responses of several systems on the MS MARCOv1 dataset~\citep{nguyen2016ms} and crowdsourced human evaluations of the system output
  (see \refsec{tasks} for details).

The instance-level correlation (\reffig{bias-msmarco-instance}) is only $\rho = 0.31$.
A closer look at the instance-level correlation reveals that
while ROUGE is able to correctly assign low scores to bad examples (lower left),
it is bad at judging good examples and often assigns them low ROUGE scores (lower right)---see \reftab{examples} for examples.
This observation agrees with a finding reported in \citet{novikova2017why} that automatic metrics correlate better with human judgments on bad examples than average or good examples.

Thus, as \reffig{bias-msmarco}(a) shows, we can improve low-scoring ROUGE examples without improving their human judgment ($\vartriangle$) and vice versa ($\triangleright$).
Indeed, \citet{conroy2008mind} report that summarization systems were optimized for ROUGE during the DUC challenge~\citep{dang2006overview}
until they were indistinguishable from the ROUGE scores of human-generated summaries, but the systems had hardly improved on human evaluation.
Hill-climbing on ROUGE can also lead to a system that does worse on human scores, e.g.\ in machine translation~\citep{wu2016google}.
Conversely, genuine quality improvements might not be reflected in improvements in ROUGE\@.
This bias also appears in pool-based evaluation for knowledge base population \citep{chaganty2017unbiased}.
Thus the problems with automatic metrics clearly motivate the need for human evaluation,
but can we still use the automatic metrics somehow to save costs?

\section{Statistical estimation for unbiased evaluation}
\label{sec:method}

We will now formalize the problem of combining human evaluation with an automatic metric.
Let $\sX$ be a set of inputs (e.g., articles),
and let $S$ be the \emph{system} (e.g.\ for summarization),
which takes $x \in \sX$ and returns output $S(x)$ (e.g.\ a summary).
Let $\sZ = \{ (x, S(x)) : x \in \sX \}$ be the set of system predictions.
Let $Y(z)$ be the random variable representing the human judgment according to some evaluation prompt (e.g.\ grammaticality or correctness),
and define $f(z) = \E[Y(z)]$ to be the (unknown) \emph{human metric} corresponding to averaging over an infinite number of human judgments.
Our goal is to estimate the average across all examples:
\begin{align}
\mu \eqdef \E_z[f(z)] = \inv{|\sZ|} \sum_{z \in \sZ} f(z)
\end{align}
with as few queries to $Y$ as possible.

Let $g$ be an automatic metric (e.g. ROUGE), which maps $z$ to a real number.
We assume evaluating $g(z)$ is free.
The central question is how to use $g$ in conjunction with calls to $Y$ to produce an unbiased estimate $\hat\mu$ (that is, $\E[\hat\mu] = \mu$).
In this section, we will construct a simple estimator based on control variates~\citep{ripley2009stochastic},
and prove that it is minimax optimal.

\subsection{Sample mean}

We warm up with the most basic unbiased estimate, the sample mean.
We sample $z^{(1)}, \dots, z^{(n)}$ independently with replacement from $\sZ$.
Then, we sample each human judgment $y^{(i)} = Y(z^{(i)})$ independently.\footnote{%
Note that this independence assumption isn't quite true in practice since we do not control who annotates our data.}
Define the estimator to be $\musimple = \frac{1}{n} \sum_{i=1}^n y^{(i)}$.
Note that $\musimple$ is unbiased ($\E[\musimple] = \mu$).

We can define $\sigma^2_f \eqdef \Var(f(z))$ as the variance of the human metric
and $\sigma^2_a \eqdef \E_z[\Var(Y(z))]$ as the variance of human judgment averaged over $\sZ$.
By the law of total variance, the variance of our estimator
is
\begin{align}
\label{eqn:varsimple}
\Var(\musimple) = \frac{1}{n} (\sigma^2_f + \sigma^2_a).
\end{align}

\subsection{Control variates estimator}

Now let us see how an automatic metric $g$ can reduce variance.
If there is no annotator variance ($\sigma^2_a = 0$) so that $Y(z) = f(z)$,
we should expect the variance of $f(z)-g(z)$ to be lower than the variance of
$f(z)$, assuming $g$ is correlated with $f$---see \reffig{variance_reduction} for an illustration.

The actual control variates estimator needs to
handle noisy $Y(z)$ (i.e.\ $\sigma^2_a > 0$) and
guard against a $g(z)$ with low correlation.
Let us standardize $g$ to have zero mean and unit variance, because we have
assumed it is free to evaluate.
As before, let $z^{(1)}, \dots, z^{(n)}$ be independent samples from $\sZ$ and
draw $y^{(i)} = Y(z^{(i)})$ independently as well.
We define the \emph{control variates estimator} as
\begin{align}
\mucontrol = \frac{1}{n} \sum_{i=1}^n y^{(i)} - \alpha g(z^{(i)}),
\end{align}
where
\begin{align}
  \alpha \eqdef \Cov(f(z),g(z)).
\end{align}
Intuitively, we have averaged over $y^{(i)}$ to handle the noise introduced by $Y(z)$, and scaled $g(z)$ to prevent an uncorrelated automatic metric from introducing too much noise.

An important quantity governing the quality of an automatic metric $g$
is the correlation between $f(z)$ and $g(z)$ (recall that $g$ has unit variance):
\begin{align}
\rho \eqdef \frac{\alpha}{\sigma_f}.  
\end{align}

\begin{figure}
\centering
  \includegraphics[width=\columnwidth]{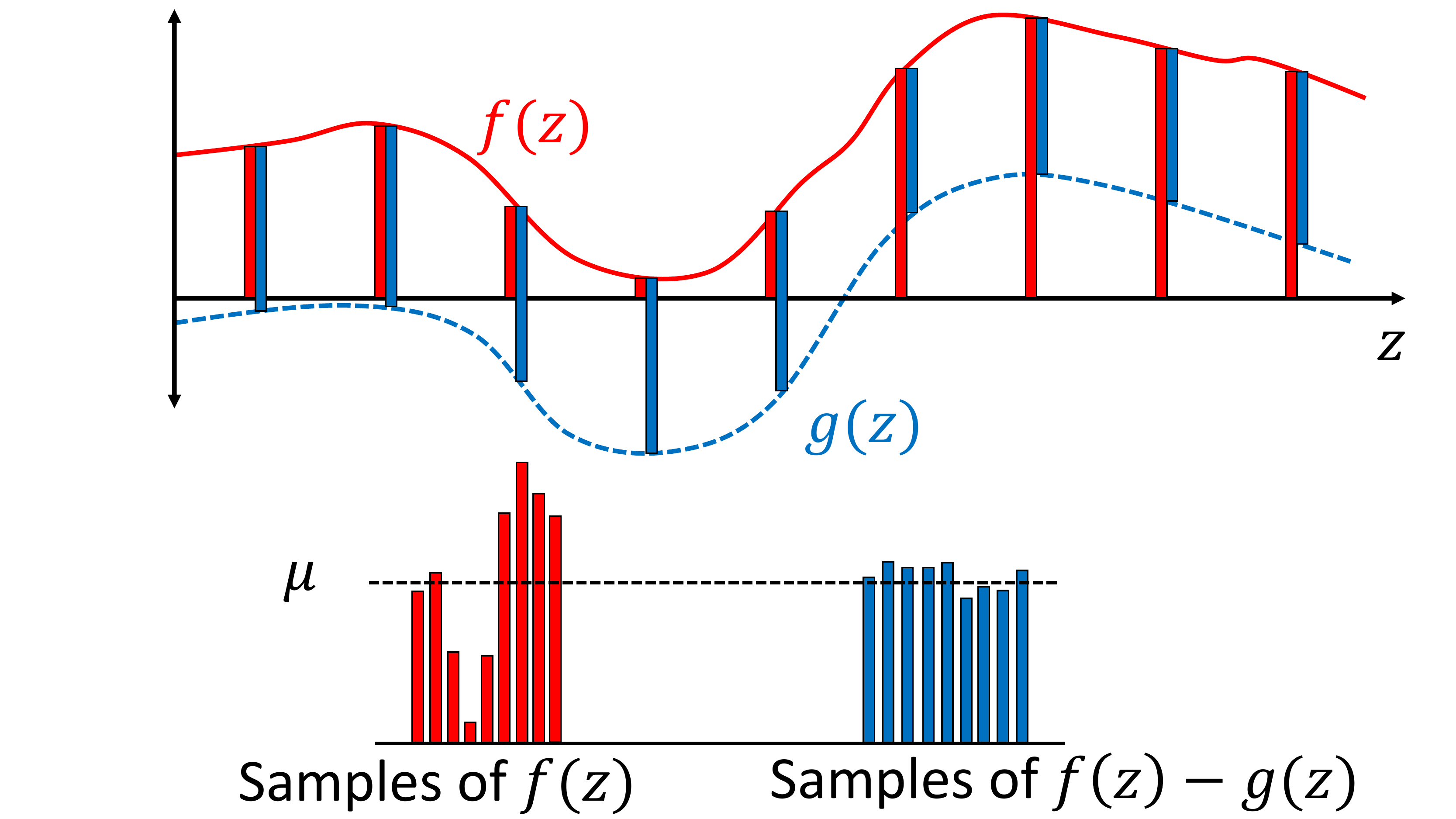}
  \caption{\label{fig:variance_reduction}
  The samples from $f(z)$ have a higher variance than the samples
  from $f(z)-g(z)$ but the same mean. This is the key idea behind using control variates to reduce variance.}
\end{figure}

We can show that among all distributions with
fixed $\sigma^2_f$, $\sigma^2_a$, and $\alpha$ (equivalently $\rho$), this estimator is minimax optimal, i.e.\ it has the least variance among all unbiased estimators:

\begin{theorem}
\label{thm:main}
Among all unbiased
  estimators that are functions of $y^{(i)}$ and $g(z^{(i)})$, and for all distributions with a given $\sigma^2_f$, $\sigma^2_a$, and $\alpha$,
\begin{align}
  \label{eqn:varcontrol}
  \Var(\mucontrol) = \frac{1}{n} (\sigma^2_f (1 - \rho^2) + \sigma^2_a),
\end{align}
and no other estimator has a lower worst-case variance.
\end{theorem}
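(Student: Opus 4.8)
The plan is to prove the two assertions separately: the exact variance identity for $\mucontrol$ is a short computation, while the minimax statement requires exhibiting a worst-case distribution and showing no unbiased estimator can beat $\mucontrol$ on it.

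\emph{The variance identity.} Since $z^{(1)},\dots,z^{(n)}$ are drawn i.i.d.\ from $\sZ$ and each $y^{(i)}=Y(z^{(i)})$ is then drawn independently, the summands $y^{(i)}-\alpha g(z^{(i)})$ are i.i.d., so $\Var(\mucontrol)=\tfrac1n\Var\big(Y(z)-\alpha g(z)\big)$. Conditioning on $z$ (so that $g(z)$ is a constant) and applying the law of total variance gives $\E_z[\Var(Y(z)\mid z)]+\Var_z(f(z)-\alpha g(z))=\sigma_a^2+\sigma_f^2-2\alpha\,\Cov(f(z),g(z))+\alpha^2=\sigma_a^2+\sigma_f^2-\alpha^2$; using $\rho=\alpha/\sigma_f$ this equals $\sigma_a^2+\sigma_f^2(1-\rho^2)$, and dividing by $n$ yields \refeqn{varcontrol}. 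In particular $\Var(\mucontrol)$ takes the same value for every distribution in the family, so its worst-case variance equals this value.

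\emph{The lower bound.} It suffices to produce a single distribution consistent with the prescribed $\sigma_f^2,\sigma_a^2,\alpha$ on which every unbiased estimator has variance at least $\tfrac1n(\sigma_f^2(1-\rho^2)+\sigma_a^2)$, since any estimator unbiased over the whole family is in particular unbiased there. I will use a Gaussian instance: let $z=(F,G)$ with $(F,G)$ bivariate normal, $\E F=\mu$, $\E G=0$, $\Var F=\sigma_f^2$, $\Var G=1$, $\Cov(F,G)=\alpha$; set $f(z)=F$, $g(z)=G$, and $Y(z)=f(z)+\eta$ with $\eta\sim\sN(0,\sigma_a^2)$ independent of $z$. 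This instance realizes all three prescribed quantities. Now put $w^{(i)}\eqdef y^{(i)}-\alpha g(z^{(i)})$. Because $(f(z^{(i)}),g(z^{(i)}))$ is jointly Gaussian and $\Cov(f(z^{(i)})-\alpha g(z^{(i)}),g(z^{(i)}))=\alpha-\alpha=0$, the variable $w^{(i)}$ is independent of $g(z^{(i)})$; moreover the $w^{(i)}$ are i.i.d.\ $\sN(\mu,\tau^2)$ with $\tau^2=\sigma_f^2(1-\rho^2)+\sigma_a^2$, while the $g(z^{(i)})$ are i.i.d.\ $\sN(0,1)$ with law not depending on $\mu$. Since $\alpha$ is a fixed known constant, observing $(y^{(i)},g(z^{(i)}))_{i=1}^n$ is equivalent to observing $(w^{(i)},g(z^{(i)}))_{i=1}^n$, in which the $g(z^{(i)})$ are ancillary; hence the Fisher information about $\mu$ in the sample is $n/\tau^2$, and the Cram\'er--Rao bound (valid for any finite-variance estimator in this exponential family) forces $\Var(\hat\mu)\ge\tau^2/n$ for every unbiased $\hat\mu$. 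Equivalently, a $\sN(0,\lambda)$ prior on $\mu$ yields Bayes risk $(n/\tau^2+1/\lambda)^{-1}$, which is dominated by the worst-case risk for every $\lambda$ and tends to $\tau^2/n$; for unbiased estimators the risk is the variance, so the worst-case variance of any such estimator is at least $\Var(\mucontrol)$, as claimed.

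\emph{Main obstacle.} The delicate step is the lower bound, and within it the choice of hard instance: we need a distribution in which the automatic metric carries no information about $\mu$ once the constraint $\Cov(f,g)=\alpha$ is imposed. The Gaussian construction is engineered precisely so that the residual $w=Y-\alpha g$ is independent of $g$, which is why the resulting bound is tight and cannot be improved by any cleverer use of the metric; for a non-Gaussian $g$ one would in general expect $w$ to retain dependence on $g$, and the bound would be loose. A minor point to check is that the minimax ranges over distributions---hence over choices of $\sZ$, $f$, $g$, and the conditional law of $Y$---so the Gaussian instance is admissible even though $\sZ$ is finite in the applications; if one insists on finite $\sZ$, the same bound follows by discretizing the Gaussian and passing to the limit.
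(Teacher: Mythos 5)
Your proof is correct, and the first half (law of total variance conditioned on the $z^{(i)}$, giving $\Var(\mucontrol)=\tfrac1n(\sigma_f^2(1-\rho^2)+\sigma_a^2)$ identically over the family) matches the paper's Lemma on the variance computation essentially line for line; you also pick the same hard instance, a jointly Gaussian $(f(z),g(z))$ with independent Gaussian annotator noise. Where you genuinely diverge is the mechanism for the lower bound: the paper shows that $\mucontrol$ is a sufficient and complete statistic for $\mu$ in that Gaussian family (Fisher--Neyman factorization plus completeness of the Gaussian location family) and invokes Lehmann--Scheff\'e to conclude it is the MVUE, whereas you reparameterize the data as $(w^{(i)},g(z^{(i)}))$ with $w^{(i)}=y^{(i)}-\alpha g(z^{(i)})\sim\mathcal{N}(\mu,\tau^2)$ independent of the ancillary $g(z^{(i)})$, and then apply the Cram\'er--Rao bound (or, equivalently, your limiting-Gaussian-prior Bayes-risk argument) to get $\Var(\hat\mu)\ge\tau^2/n$ for every unbiased $\hat\mu$. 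Both routes are sound: the information-theoretic route gives the numerical bound directly and makes transparent \emph{why} the instance is hard (the residual $w$ is engineered to be independent of $g$, so the metric is useless beyond the $\alpha$-correction), at the cost of the standard differentiation-under-the-integral regularity, which is automatic here since the model is a Gaussian exponential family; the paper's Lehmann--Scheff\'e route avoids any regularity discussion and yields the slightly stronger conclusion that $\mucontrol$ is exactly the MVUE for that family, at the cost of verifying completeness. Your closing remark that the minimax is over all admissible distributions (so a continuous Gaussian instance is legitimate, or can be approximated by discretization if one insists on finite $\sZ$) addresses a point the paper itself leaves implicit.
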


Comparing the variances of the two estimators (\refeqns{varsimple}{varcontrol}),
we define the \emph{data efficiency} as the ratio of the variances:
\begin{align}
\DE \eqdef \frac{\Var(\musimple)}{\Var(\mucontrol)} = \frac{1 + \gamma}{1-\rho^2 + \gamma},
\end{align}
where $\gamma \eqdef \sigma^2_a / \sigma^2_f$ is the normalized annotator variance.
Data efficiency is the key quantity in this paper:
  it is the multiplicative reduction in the number of samples required
  when using the control variates estimator $\mucontrol$ versus the sample mean $\musimple$.
Figure~\ref{fig:savings} shows the inverse data efficiency contours as a function of the correlation $\rho$
and $\gamma$.

\begin{figure}
\centering
  \includegraphics[width=\columnwidth]{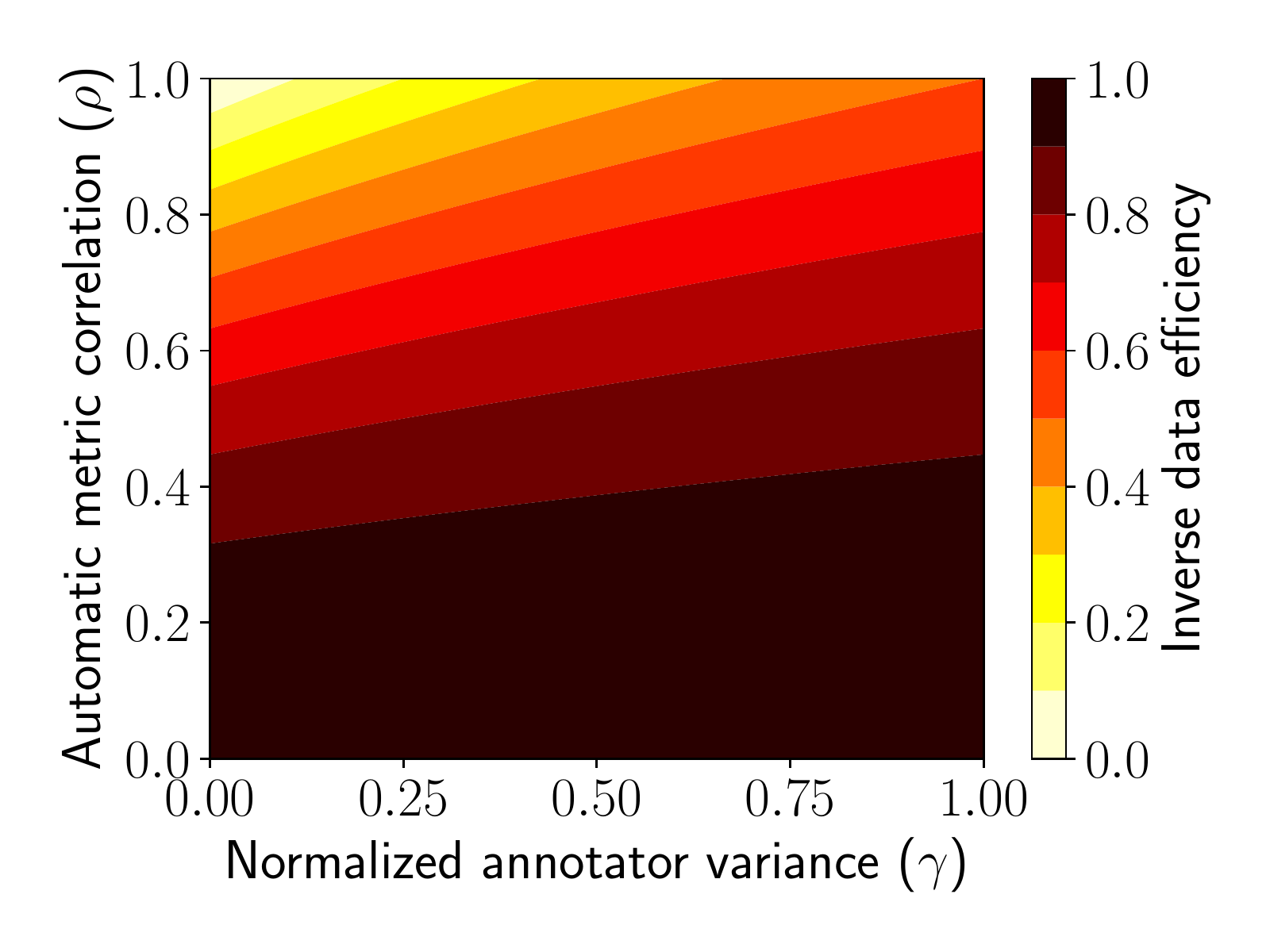}
  \caption{\label{fig:savings} Inverse data efficiency for various values of
  $\gamma$ and $\rho$.  We need both low $\gamma$ and high $\rho$ to obtain
  significant gains.
  }
\end{figure}

When there is no correlation between human and automatic metrics ($\rho = 0$),
the data efficiency is naturally $1$ (no gain).
In order to achieve a data efficiency of
$2$ (half the labeling cost), we need $|\rho| \geq \sqrt{2}/2 \approx 0.707$.
Interestingly, even for an automatic metric with perfect correlation ($\rho=1$),
the data efficiency is still capped by $\frac{1 + \gamma}{\gamma}$:
unless $\gamma \to 0$ the data efficiency cannot increase unboundedly.
Intuitively, even
if we knew that $\rho=1$, $f(z)$ would be undetermined up to a
constant additive shift and just estimating the shift would incur a variance of $\frac{1}{n} \sigma_a^2$.

\subsection{Using the control variates estimator}
The control variates estimator can be easily integrated into an existing evaluation:
we run human evaluation on a random sample of system outputs, automatic evaluation on all the system outputs, and plug in these results into \refalg{estimate}.

It is vital that we are able to evaluate the automatic metric on a significantly larger set of examples than those with human evaluations to reliably normalize $g(z)$:
without these additional examples, it be can shown that the optimal minimax estimator for $\mu$ is simply the naive estimate $\musimple$.
Intuitively, this is because estimating the mean of $g(z)$ incurs an equally large variance as estimating $\mu$.
In other words, $g(z)$ is only useful if we have additional information about $g$ beyond the samples $\{z^{(i)}\}$.

\refalg{estimate} shows the estimator.
In practice, we do not know $\alpha = \Cov(f(z),g(z))$, so we use a plug-in estimate $\hat{\alpha}$ in line 3 to compute the estimate $\widetilde{\mu}$ in line 4.
We note that estimating $\alpha$ from data does introduce a $O(1/n)$ bias,
but when compared to the standard deviation which decays as $\Theta(1/\sqrt{n})$, this bias quickly goes to $0$.

\begin{proposition}
\label{prop:added_bias}
The estimator $\widetilde{\mu}$ in \refalg{estimate} has $O(1/n)$ bias.
\end{proposition}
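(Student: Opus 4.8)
The plan is to show that the only source of bias is the dependence between the plug-in coefficient $\hat\alpha$ and the very samples $\{z^{(i)}\}$ that it multiplies, and that, since both are averages over the $n$ annotated examples, this dependence contributes only $O(1/n)$. Write the estimator as $\widetilde\mu = \bar y - \hat\alpha\,\bar g$, where $\bar y \eqdef \frac1n\sum_{i=1}^n y^{(i)}$ and $\bar g \eqdef \frac1n\sum_{i=1}^n g(z^{(i)})$. Because $g$ is standardized against the full set of system predictions, $\E_z[g(z)] = 0$, so $\E[\bar g] = 0$ and $\E[\bar y] = \mu$; hence $\E[\widetilde\mu] - \mu = -\E[\hat\alpha\,\bar g] = -\Cov(\hat\alpha,\bar g)$. (Equivalently: $\widetilde\mu$ coincides with the exactly-unbiased $\mucontrol$ when $\hat\alpha$ is replaced by $\alpha$, so all the bias lives in $\hat\alpha - \alpha$.) It therefore suffices to prove $\Cov(\hat\alpha,\bar g) = O(1/n)$.

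Next I would plug in the empirical form $\hat\alpha = \frac1n\sum_{i=1}^n(y^{(i)} - \bar y)\,g(z^{(i)}) = \frac1n\sum_{i=1}^n y^{(i)}g(z^{(i)}) - \bar y\,\bar g$, so that $\hat\alpha\,\bar g$ expands into a handful of multilinear sums of the form $n^{-k}\sum_{i_1,\dots}\big(\text{product of at most three factors drawn from the i.i.d.\ pairs }(z^{(i)},y^{(i)})\big)$ with $k\in\{2,3\}$. Taking expectations, I would group the index-tuples by their equality pattern; the expectation factorizes over the blocks of the pattern, and any singleton block that carries only a $g(z^{(i)})$ factor contributes $\E_z[g(z)]=0$ and annihilates the term. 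The surviving patterns are exactly those in which every singleton block carries a $y$-factor (whose block expectation is $\mu$ or $\E_z[f(z)g(z)]=\alpha$); each such pattern has at most two blocks, hence $O(n^2)$ tuples, against a normalization of $n^{-2}$ or $n^{-3}$, giving $O(1/n)$. Concretely one finds $\Cov(\hat\alpha,\bar g) = \tfrac1n\E_z[f(z)g(z)^2] - \tfrac{n-1}{n^2}\,\mu + O(1/n^2)$, which is $O(1/n)$ provided $Y(z)$ and $g(z)$ have finite third moments; I would state this moment condition explicitly as a hypothesis.

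The substantive work is entirely bookkeeping: one must fix the precise form of $\hat\alpha$ used in \refalg{estimate} (whether $g$ and $y$ are centered by their sample or population means — this only changes which cross-terms appear, not the conclusion), enumerate the equality patterns, and invoke the moment condition so the surviving constants are finite. Nothing here is deep — the mechanism is the familiar delta-method phenomenon that $\widetilde\mu$ is a smooth function of the sample averages $\bar y$, $\frac1n\sum_i y^{(i)}g(z^{(i)})$, and $\bar g$, evaluated at a point where the first-order term in $\bar g$ has mean zero, so only the $O(1/n)$ second-order term survives in expectation; a Taylor expansion around $(\mu,\alpha,0)$ with a crude remainder bound is an equally valid route, and I would present whichever turns out shorter.
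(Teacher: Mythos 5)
Your proposal is correct and takes essentially the same route as the paper's proof: both reduce the bias to $\E[\hat{\alpha}\,\bar{g}]$ (the paper writes it as $\frac{1}{n}\sum_i \E[\hat{\alpha}\, g(z^{(i)})]$), expand it using the empirical form of $\hat{\alpha}$ into multilinear sums, annihilate every term containing a lone mean-zero $g(z^{(i)})$ factor via independence, and count the $O(n)$ and $O(n^2)$ surviving index tuples against the $n^{-2}$ and $n^{-3}$ normalizations to conclude $O(1/n)$. Your explicit finite-moment hypothesis is a minor refinement that the paper leaves implicit by writing the surviving expectations as $O(1)$.
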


\begin{algorithm}
      \caption{\label{alg:estimate}Control variates estimator}
      \begin{algorithmic}[1]
   \STATE{} {\bfseries Input:} $n$ human evaluations $y^{(i)}$ on system outputs $z^{(i)}$, \textit{normalized} automatic metric $g$ 
   \STATE{} $\overline{y} = \frac{1}{n} \sum_i y^{(i)}$
   \STATE{} $\hat{\alpha} = \frac{1}{n} \sum_i (y^{(i)} - \overline{y}) g(z^{(i)})$
   \STATE{} $\widetilde{\mu} = \frac{1}{n} \sum_i y^{(i)} - \hat{\alpha} g(z^{(i)})$
   \STATE{} {\bfseries return} $\widetilde{\mu}$
\end{algorithmic}
\end{algorithm}

An additional question that arises when applying \refalg{estimate} is figuring out how many samples $n$ to use.
Given a target variance, the number of samples can be estimated using \refeqn{varcontrol} with conservative estimates of $\sigma^2_f$, $\sigma^2_a$ and $\rho$.
Alternatively, our estimator can be combined with a dynamic stopping rule~\citep{mnih2008empirical} to stop data collection once we reach a target confidence interval.

\subsection{Discussion of assumptions}
We will soon see that empirical instantiations of $\gamma$ and $\rho$ lead to rather underwhelming data efficiencies in practice.
In light of our optimality result, does this mean there is no hope for gains?
Let us probe our assumptions.
We assumed that the human judgments are uncorrelated across different system outputs;
it is possible that a more accurate model of human annotators (e.g.\ \citet{passonneau2014benefits}) could offer improvements.
Perhaps with additional information about $g(z)$ such as calibrated confidence estimates,
we would be able to sample more adaptively.
Of course the most direct routes to improvement involve increasing the correlation of $g$ with human judgments and reducing annotator variance,
which we will discuss more later.

\begin{figure*}[t]
  \centering
  \subcaptionbox{\label{fig:interfaces-edit}Interface to evaluate language quality on CNN/Daily Mail}[0.47\linewidth]{%
    \includegraphics[width=0.47\textwidth]{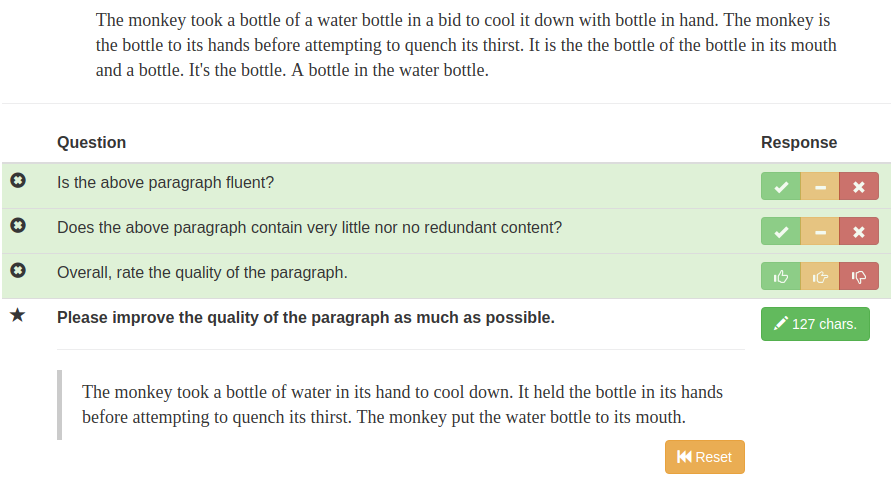}
  }\hfill
  \subcaptionbox{\label{fig:interfaces-qa}Interface to judge answer correctness on MS MARCO}[0.47\linewidth]{%
    \includegraphics[width=0.47\textwidth]{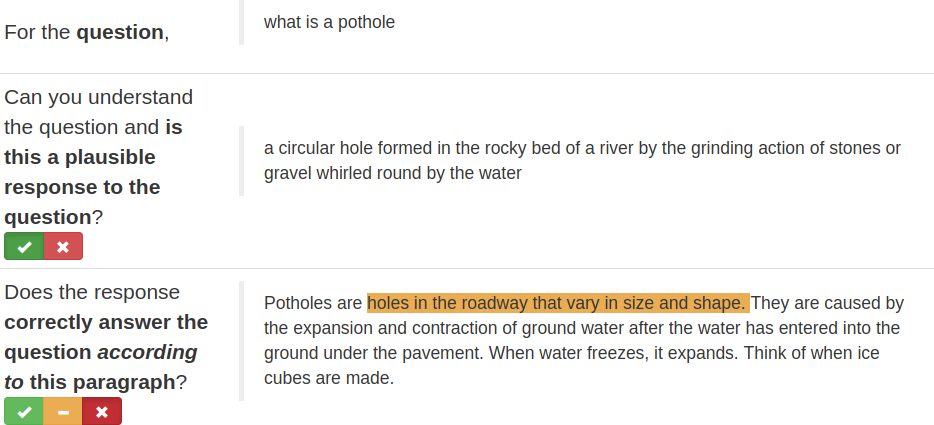}
  }
  \caption{\label{fig:tasks} Screenshots of the annotation interfaces we used to measure (a) summary language quality on CNN/Daily Mail and (b) answer correctness on MS MARCO tasks.
  }
\end{figure*}

\begin{table}[t]
  \centering
  \input{tasks.table}
  \caption{\label{tab:dataset} A summary of the key statistics, human metric variance ($\sigma^2_f$) and annotator variance ($\sigma^2_a$) for different datasets, CNN/Daily Mail (CDM) and MS MARCO in our evaluation benchmark.
  We observe that the relative variance ($\gamma$) is fairly high for most evaluation prompts, upper bounding the data efficiency on these tasks.
  A notable exception is the \texttt{Edit} prompt wherein systems are compared on the number of post-edits required to improve their quality.
  }
\end{table}

\section{\label{sec:tasks} Tasks and datasets}

In order to compare different approaches to evaluating systems, we first collected human judgments for the output of several automatic summarization and open-response question answering systems using Amazon Mechanical Turk.
Details of instructions provided and quality assurance steps taken are provided in \refapp{interfaces} of the supplementary material.
In this section, we'll briefly describe how we collected this data.

\paragraph{Evaluating language quality in automatic summarization.}
In automatic summarization, systems must generate a short (on average two or three sentence) summary of an article: for our study, we chose articles from the CNN/Daily Mail (CDM) dataset~\citep{hermann2015read,nallapati2016abstractive} which come paired with reference summaries in the form of story highlights.
We focus on the \textit{language quality} of summaries and leave evaluating content selection to future work.

For each summary, we collected human judgments on a scale from 1--3 (\reffig{interfaces-edit}) for fluency, (lack of) redundancy, and overall quality of the summary using guidelines from the DUC summarization challenge~\citep{dang2006overview}.
As an alternate human metric, we also asked workers to post-edit the system's summary to improve its quality, similar to the post-editing step in MT evaluations~\citep{snover2006ter}.
Obtaining judgments costs about \$0.15 per summary and this cost rises to about \$0.40 per summary for post-editing.

We collected judgments on the summaries generated by the \texttt{seq2seq} and \texttt{pointer} models of \citet{see2017point}, the \texttt{ml} and \texttt{ml+rl} models of \citet{paulus2018deep}, and the reference summaries.\footnote{%
All system output was obtained from the original authors through private communication.} 
Before presenting the summaries to human annotators, we performed some minimal post-processing: we true-cased and de-tokenized the output of \texttt{seq2seq} and \texttt{pointer} using Stanford CoreNLP~\citep{manning2014stanford} and replaced ``unknown'' tokens in each system with a special symbol ($\blacksquare$).

\paragraph{Evaluating answer correctness.}
Next, we look at evaluating the correctness of system outputs in question answering using the MS MARCO question answering dataset~\citep{nguyen2016ms}.
Here, each system is provided with a question and up to 10 paragraphs of context.
The system generates open-response answers that do not need to be tied to a span in any paragraph.

We first ask annotators to judge if the output is even plausible for the question,
and if yes,
ask them identify if it is correct according to each context paragraph. 
We found that requiring annotators to highlight regions in the text that support their decision
substantially improved the quality of the output without increasing costs.
Annotations cost \$0.40 per system response.\footnote{%
  This cost could be significantly reduced if systems also specify which passage they used to generate the answer.
}

While our goal is to evaluate the correctness of the provided answer, we found that there are often answers which may be correct or incorrect depending on the context.
For example, the question ``what is a pothole'' is typically understood to refer to a hole in a roadway, but also refers to a geological feature (\reffig{interfaces-qa}).
This is reflected when annotators mark one context paragraph to support the given answer but mark another to contradict it.
We evaluated systems based on both the average correctness (AvgCorrect) of their answers across all paragraphs
as well as whether their answer is correct according to any paragraph (AnyCorrect).

We collected annotations on the systems generated by the \texttt{fastqa} and
\texttt{fastqa\_ext} from \citet{weissenborn2017fastqa} and the \texttt{snet} and \texttt{snet.ens}(emble) models from \citet{tan2018s}, along with reference answers.
The answers generated by the systems were used without any post-processing.
Surprisingly, we found that the correctness of the reference answers (according to the AnyCorrect metric) was only 73.5\%,
only 2\% above that of the leading system ($\texttt{snet.ens}$).
We manually inspected 30 reference answers which were annotated incorrectly and found that of those, 
about 95\% were indeed incorrect.
However, 62\% are actually answerable from some paragraph,
indicating that the real ceiling performance on this dataset is around 90\% and
that there is still room for improvement on this task.

\begin{figure*}[th]
  \centering
  \begin{subfigure}[b]{0.45\textwidth}
  \includegraphics[width=\textwidth]{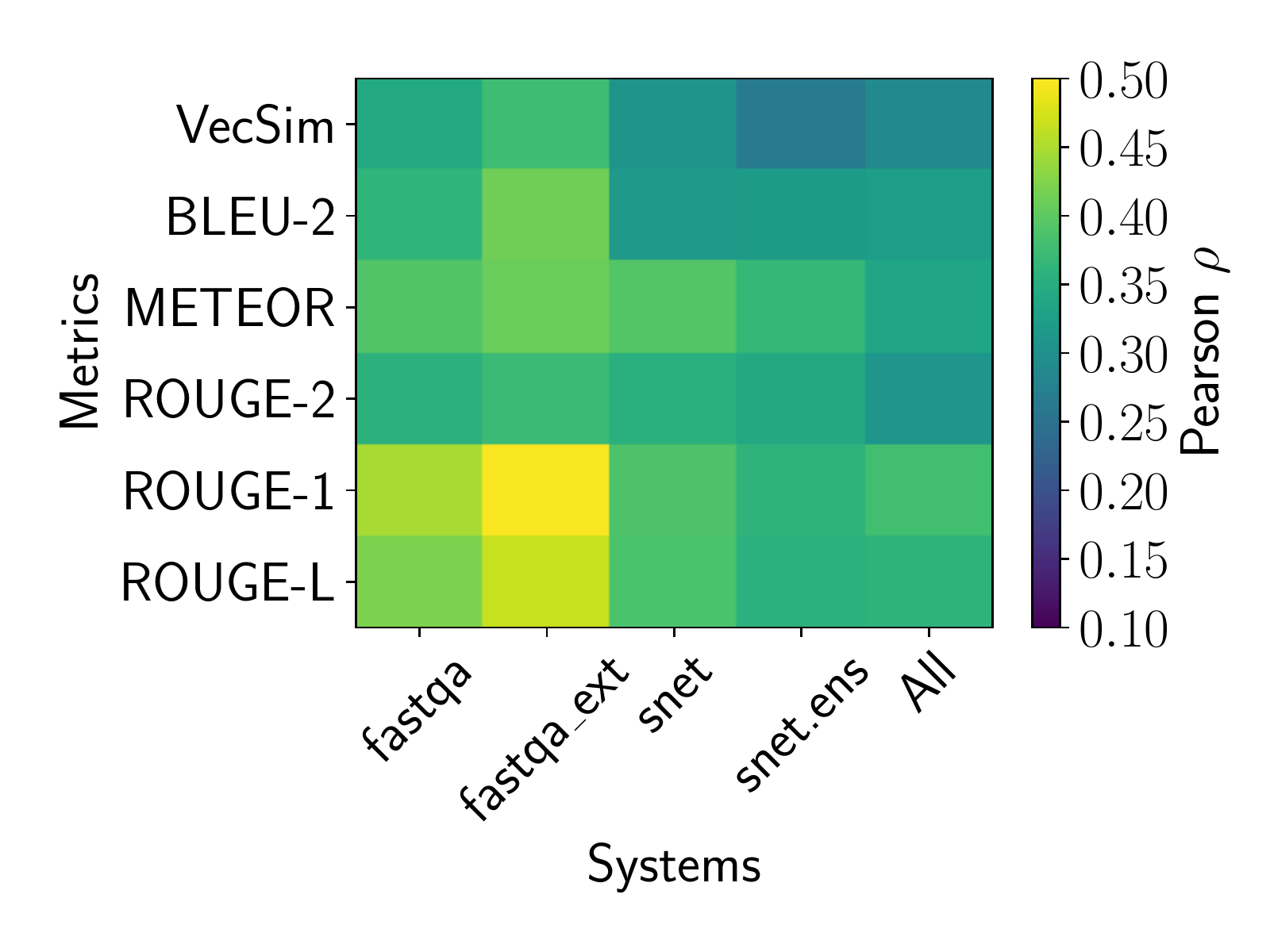}
  \caption{MS MARCO with the \texttt{AnyCorrect} prompt}
  \end{subfigure} \hfill
  \centering
  \begin{subfigure}[b]{0.45\textwidth}
  \includegraphics[width=\textwidth]{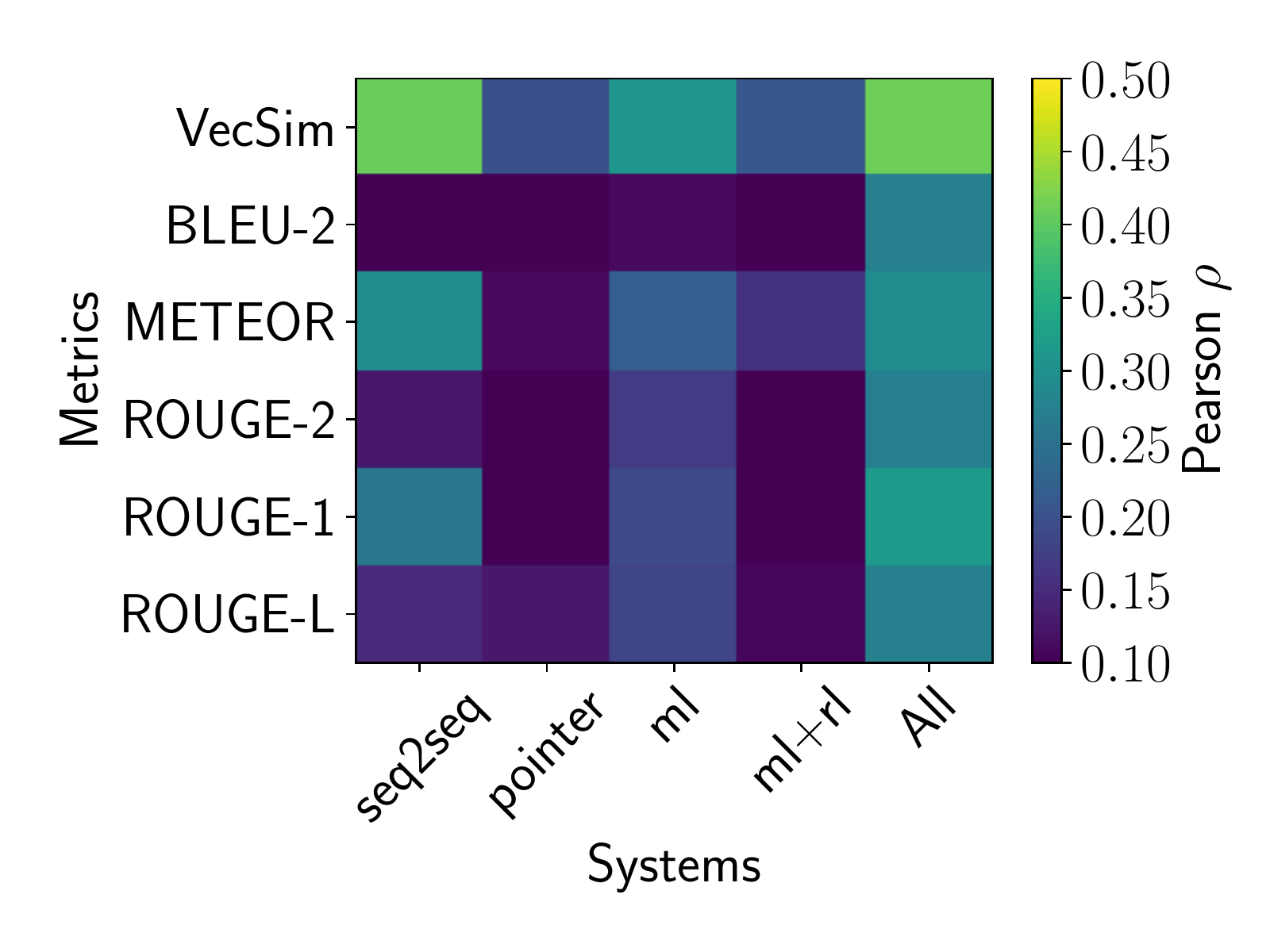}
  \caption{CNN/Daily Mail with the \texttt{Edit} prompt}
  \end{subfigure}
  \caption{\label{fig:correlation} Correlations of different automatic metrics on the MS MARCO and CNN/Daily Mail tasks.
  Certain systems are more correlated with certain automatic metrics than others, but overall the correlation is low to moderate for most systems and metrics.
  }
\end{figure*}

\section{\label{sec:evaluation}Experimental results}

We are now ready to evaluate the performance of our control variates estimator proposed in \refsec{method} using the datasets presented in \refsec{tasks}.
Recall that our primary quantity of interest is \textit{data efficiency}, the ratio of the number of human judgments required to estimate the overall human evaluation score for the control variates estimator versus the sample mean.
We'll briefly review the automatic metrics used in our evaluation before analyzing the results.

\paragraph{Automatic metrics.}
We consider the following frequently used automatic word-overlap based metrics in our work:
\textbf{BLEU}~\citep{papineni02bleu}, \textbf{ROUGE}~\citep{lin2004rouge} and \textbf{METEOR}~\citep{lavie2009meteor}.
Following \citet{novikova2017why} and \citet{liu2016evaluate}, we also compared a vector-based sentence-similarity using \texttt{sent2vec}~\citep{pagliardini2017unsupervised} to compare sentences (\textbf{VecSim}).
\reffig{correlation} shows how each of these metrics is correlated with human judgment for the systems being evaluated.
Unsurprisingly, the correlation varies considerably across systems, with token-based metrics correlating more strongly for systems that are more extractive in nature (\texttt{fastqa} and \texttt{fastqa\_ext}).

\begin{figure*}[th]
  \centering
  \begin{subfigure}[b]{0.32\textwidth}
  \includegraphics[width=\textwidth]{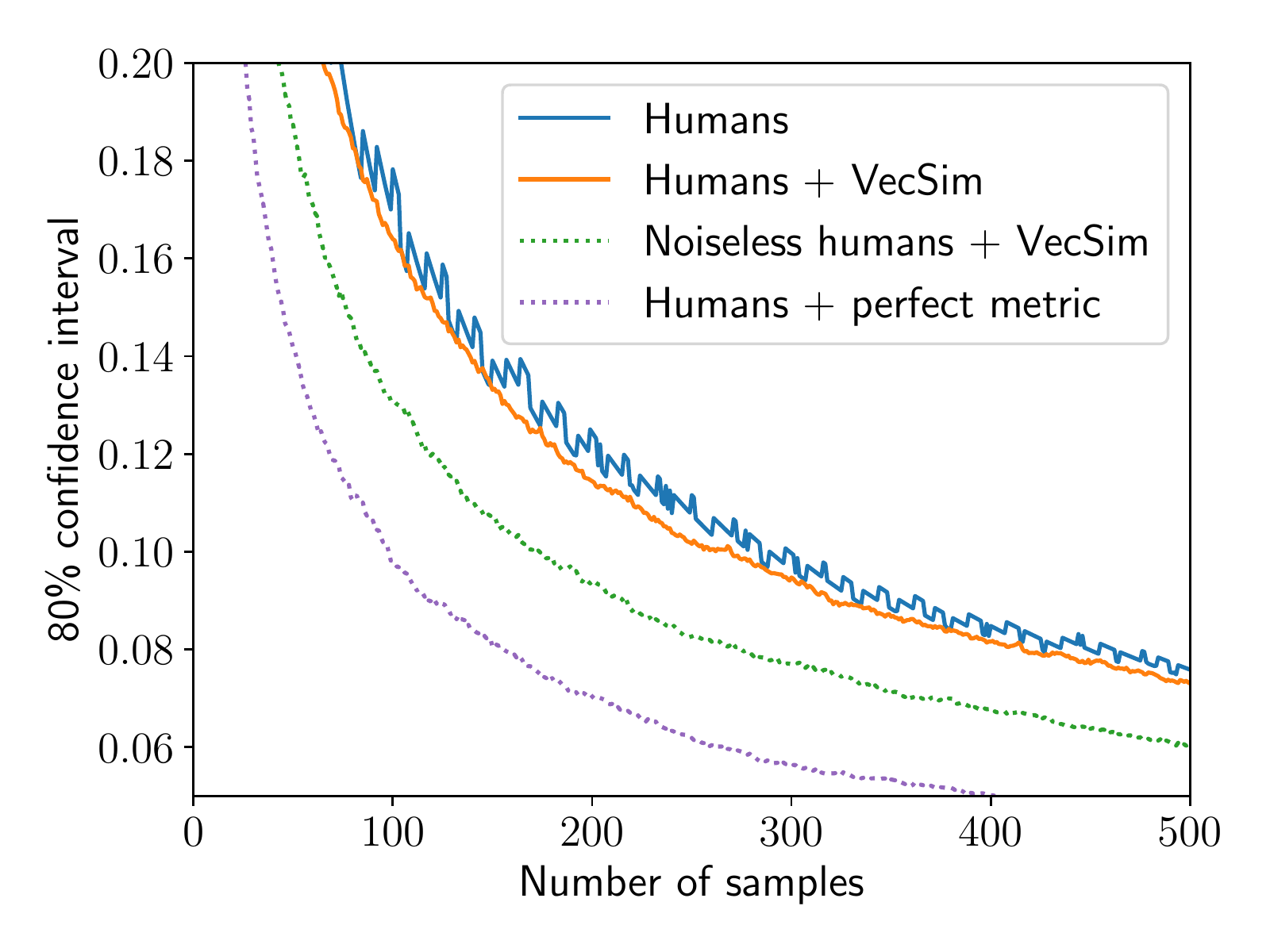}
    \caption{\label{fig:trajectory-a}\texttt{seq2seq} on CNN/Daily Mail using the \texttt{Overall}}
  \end{subfigure} 
  \hfill
  \begin{subfigure}[b]{0.32\textwidth}
  \includegraphics[width=\textwidth]{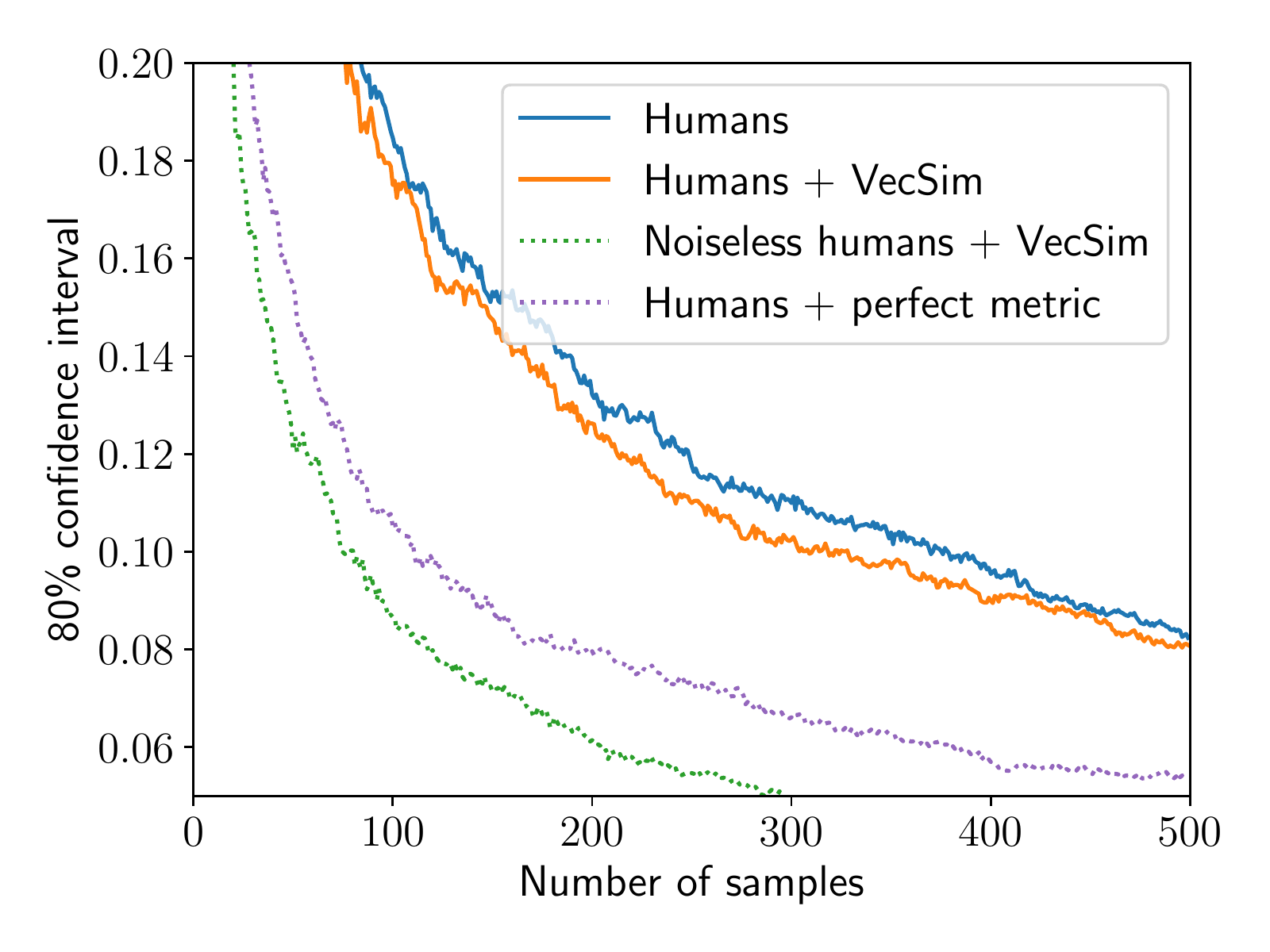}
  \caption{\label{fig:trajectory-b}\texttt{seq2seq} on CNN/Daily Mail using \texttt{Edit} }
  \end{subfigure}
  \hfill
  \begin{subfigure}[b]{0.32\textwidth}
  \includegraphics[width=\textwidth]{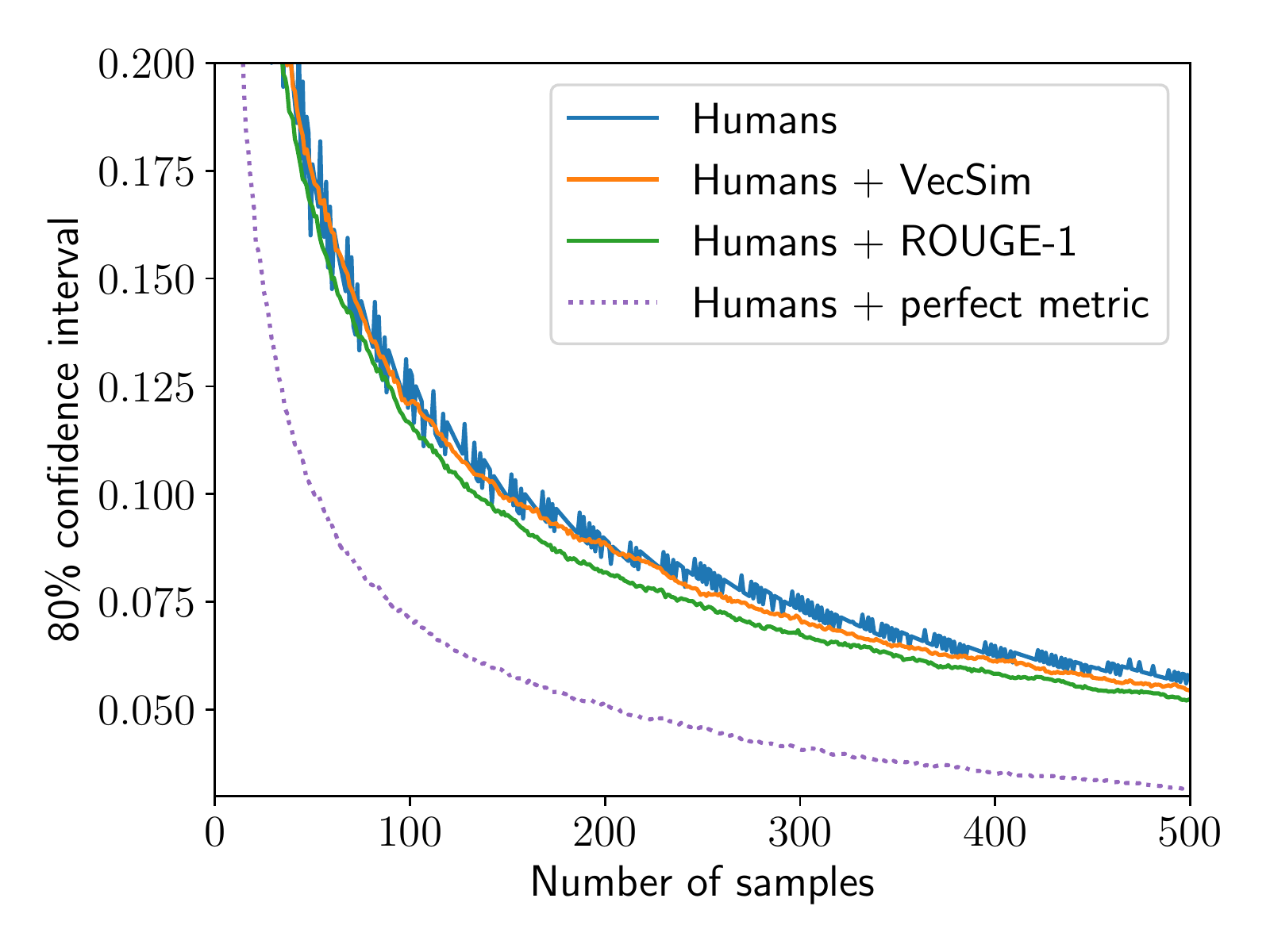}
  \caption{\label{fig:trajectory-c}\texttt{fastqa\_ext} on MS MARCO using \texttt{AnyCorrect}}
  \end{subfigure}
  \caption{\label{fig:trajectory} 80\% bootstrap confidence interval length as a function of the number of human judgments used when evaluating the indicated systems on their respective datasets and prompts.
  (a) We see a modest reduction in variance (and hence cost) relative to human evaluation by using the VecSim automatic metric with the proposed control variates estimator to estimate \texttt{Overall} scores on the CNN/Daily Mail task; the data efficiency (DE) is $1.06$.
  (b) By improving the evaluation prompt to use \texttt{Edit}s instead, it is possible to further reduce variance relative to humans (DE is $1.15$).
  (c) Another way to reduce variance relative to humans is to improve the automatic metric evaluation; here using ROUGE-1 instead of VecSim improves the DE from $1.03$ to $1.16$.
  }
\end{figure*}

\paragraph{Results.\footnote{%
  Extended results for other systems, metrics and prompts can be found at \url{https://bit.ly/price-of-debiasing/}.}
  }
In \refsec{method} we proved that the control variates estimator is not only unbiased but also has the least variance among other unbiased estimators.
\reffig{trajectory} plots the width of the 80\% confidence interval, estimated using bootstrap, measured as a function of the number of samples collected for different tasks and prompts.
As expected, the control variates estimator reduces the width of the confidence interval. 
We measure data efficiency by the averaging of the ratio of squared confidence intervals between the human baseline and control variates estimates.
We observe that the data efficiency depends on the task, prompt and system, ranging from about 1.08 (a 7\% cost reduction) to 1.15 (a 13\% cost reduction) using current automatic metrics.

As we showed in \refsec{method}, further gains are fundamentally limited by the quality of the evaluation prompts and automatic metrics.
Figures~\ref{fig:trajectory-a} and~\ref{fig:trajectory-b} show how improving the quality of the evaluation prompt from a Likert-scale prompt for quality (\texttt{Overall}) to using post-editing (\texttt{Edit}) noticeably decreases variance and hence allows better automatic metrics to increase data efficiency.
Likewise, \reffig{trajectory-c} shows how using a better automatic metric (ROUGE-L instead of VecSim) also reduces variance.

\reffig{trajectory} also shows the conjectured confidence intervals if we were able to eliminate noise in human judgments (noiseless humans) or have a automatic metric that correlated perfectly with average human judgment (perfect metric).
In particular, we use the mean of all (2--3) humans on each $z$ for the perfect $g(z)$ and use the mean of all humans on each $z$ for the ``noiseless'' $Y(z)$.

In both cases, we are able to significantly increase data efficiency (i.e.\ decrease estimator variance).
With zero annotator variance and using existing automatic metrics,
the data efficiency ranges from 1.42 to 1.69. With automatic metrics with perfect correlation and current variance of human judgments,
it ranges from 2.38 to 7.25.
Thus, we conclude that it is important not only to improve our automatic metrics but also the evaluation prompts we use during human evaluation. 

\section{\label{sec:setup} Related work}

In this work, we focus on using existing automatic metrics to decrease the cost of human evaluations.
There has been much work on improving the quality of automatic metrics.
In particular, there is interest in learning models~\citep{lowe2017towards,dusek2017referenceless} that are able to optimize for improved correlations with human judgment.
However, in our experience, we have found that these learned automatic metrics have trouble generalizing to different systems.
The framework we provide allows us to safely incorporate such models into evaluation, exploiting them when their correlation is high but also not introducing bias when it is low.

Our key technical tool is control variates, a standard statistical technique used to reduce the variance of Monte Carlo estimates~\citep{ripley2009stochastic}.
The technique has also been used in machine learning and reinforcement learning to lower variance estimates of gradients~\citep{greensmith2004variance, paisley2012variational, ranganath2014black}.
To the best of our knowledge, we are the first to apply this technique in the context of language evaluation.

Our work also highlights the importance of human evaluation.
\citet{chaganty2017unbiased} identified a similar problem of systematic bias in evaluation metrics in the setting of knowledge base population and also propose statistical estimators that relies on human evaluation to correct bias.
Unfortunately, their technique relies on having a structured output (relation triples) that are shared between systems and does not apply to evaluating natural language generation.
In a similar vein, \citet{chang2017affordable} dynamically collect human feedback to learn better dialog policies.

\section{Discussion}
\label{sec:discussion}

Prior work has shown that existing automatic metrics have poor instance-level correlation with mean human judgment and that they score many good quality responses poorly.
As a result, the evaluation is systematically biased against genuine system improvements that would lead to higher human evaluation scores but not improve automatic metrics.
In this paper, we have explored using an automatic metric to decrease the cost of human evaluation without introducing bias.
In practice, we find that with current automatic metrics and evaluation prompts data efficiencies are only 1.08--1.15 (7--13\% cost reduction).
Our theory shows that further improvements are only possible by improving the correlation of the automatic metric and reducing the annotator variance of the evaluation prompt.
As an example of how evaluation prompts could be improved, we found that using post-edits of summarizes decreased normalized annotator variance by a factor of three relative to using a Likert scale survey.
It should be noted that changing the evaluation prompt also changes the underlying ground truth $f(z)$: it is up to us to find a prompt that still captures the essence of what we want to measure.

Without making stronger assumptions, the control variates estimator we proposed outlines the limitations of unbiased estimation.
Where do we go from here?
Certainly, we can try to improve the automatic metric (which is potentially as difficult as solving the task) and brainstorming alternative ways of soliciting evaluation (which has been less explored).
Alternatively, we could give up on measuring absolute scores, and seek instead to find techniques stably rank methods and thus improve them.
As the NLP community tackles increasingly difficult tasks, human evaluation will only become more important.
We hope our work provides some clarity on to how to make it more cost effective.

\section*{Reproducibility}
All code, data, and experiments for this paper are available
on the CodaLab platform at \url{https://bit.ly/price-of-debiasing}.

\section*{Acknowledgments}
We are extremely grateful to the authors of the systems we evaluated for sharing their systems' output with us.
We also would like to thank 
  Urvashi Khandelwal and Peng Qi for feedback on an earlier draft of the paper,
  the crowdworkers on Amazon Mechanical Turk and TurkNation for their work and feedback during the data collection process,
  and the anonymous reviewers for their constructive feedback.

\bibliography{all}

\begin{thebibliography}{}
\expandafter\ifx\csname natexlab\endcsname\relax\def\natexlab#1{#1}\fi

\bibitem[{Chaganty et~al.(2017)Chaganty, Paranjape, Liang, and
  Manning}]{chaganty2017unbiased}
A.~Chaganty, A.~Paranjape, P.~Liang, and C.~Manning. 2017.
\newblock Importance sampling for unbiased on-demand evaluation of knowledge
  base population.
\newblock In {\em Empirical Methods in Natural Language Processing (EMNLP)\/}.

\bibitem[{Chang et~al.(2017)Chang, Yang, Chen, Zhou, and
  Yu}]{chang2017affordable}
C.~Chang, R.~Yang, L.~Chen, X.~Zhou, and K.~Yu. 2017.
\newblock Affordable on-line dialogue policy learning.
\newblock In {\em Empirical Methods in Natural Language Processing (EMNLP)\/}.
  pages 223--231.

\bibitem[{Conroy and Dang(2008)}]{conroy2008mind}
J.~M. Conroy and H.~T. Dang. 2008.
\newblock Mind the gap : Dangers of divorcing evaluations of summary content
  from linguistic quality.
\newblock In {\em International Conference on Computational Linguistics
  (COLING)\/}. pages 145--152.

\bibitem[{Dang(2006)}]{dang2006overview}
H.~T. Dang. 2006.
\newblock Overview of {DUC} 2006.
\newblock In {\em Document Understanding Conference\/}.

\bibitem[{Denkowski and Lavie(2014)}]{denkowski2014meteor}
M.~Denkowski and A.~Lavie. 2014.
\newblock Meteor universal: Language specific translation evaluation for any
  target language.
\newblock In {\em Workshop on Statistical Machine Translation\/}.

\bibitem[{Dusek et~al.(2017)Dusek, Novikova, and
  Rieser}]{dusek2017referenceless}
O.~Dusek, J.~Novikova, and V.~Rieser. 2017.
\newblock Referenceless quality estimation for natural language generation.
\newblock {\em arXiv\/} .

\bibitem[{Greensmith et~al.(2004)Greensmith, Bartlett, and
  Baxter}]{greensmith2004variance}
E.~Greensmith, P.~L. Bartlett, and J.~Baxter. 2004.
\newblock Variance reduction techniques for gradient estimates in reinforcement
  learning.
\newblock {\em Journal of Machine Learning Research (JMLR)\/} 5:1471--1530.

\bibitem[{Hermann et~al.(2015)Hermann, Kočiský, Grefenstette, Espeholt, Kay,
  Suleyman, and Blunsom}]{hermann2015read}
K.~M. Hermann, T.~Kočiský, E.~Grefenstette, L.~Espeholt, W.~Kay, M.~Suleyman,
  and P.~Blunsom. 2015.
\newblock Teaching machines to read and comprehend.
\newblock In {\em Advances in Neural Information Processing Systems (NIPS)\/}.

\bibitem[{Ko{\v{c}}isky et~al.(2017)Ko{\v{c}}isky, Schwarz, Blunsom, Dyer,
  Hermann, Melis, and Grefenstette}]{kovcisky2017narrativeqa}
T.~Ko{\v{c}}isky, J.~Schwarz, P.~Blunsom, C.~Dyer, K.~M. Hermann, G.~Melis, and
  E.~Grefenstette. 2017.
\newblock The {NarrativeQA} reading comprehension challenge.
\newblock {\em arXiv preprint arXiv:1712.07040\/} .

\bibitem[{Lavie and Denkowski(2009)}]{lavie2009meteor}
A.~Lavie and M.~Denkowski. 2009.
\newblock The meteor metric for automatic evaluation of machine translation.
\newblock {\em Machine Translation\/} 23.

\bibitem[{Lin and Rey(2004)}]{lin2004rouge}
C.~Lin and M.~Rey. 2004.
\newblock Looking for a few good metrics: {ROUGE} and its evaluation.
\newblock In {\em NTCIR Workshop\/}.

\bibitem[{Lin et~al.(2014)Lin, Maire, Belongie, Hays, Perona, Ramanan,
  Doll{'a}r, and Zitnick}]{lin2014microsoft}
T.~Lin, M.~Maire, S.~Belongie, J.~Hays, P.~Perona, D.~Ramanan, P.~Doll{'a}r,
  and C.~L. Zitnick. 2014.
\newblock Microsoft {COCO}: Common objects in context.
\newblock In {\em European Conference on Computer Vision (ECCV)\/}. pages
  740--755.

\bibitem[{Liu et~al.(2016{\natexlab{a}})Liu, Soderland, Bragg, Lin, Ling, and
  Weld}]{liu2016effective}
A.~Liu, S.~Soderland, J.~Bragg, C.~H. Lin, X.~Ling, and D.~S. Weld.
  2016{\natexlab{a}}.
\newblock Effective crowd annotation for relation extraction.
\newblock In {\em North American Association for Computational Linguistics
  (NAACL)\/}. pages 897--906.

\bibitem[{Liu et~al.(2016{\natexlab{b}})Liu, Lowe, Serban, Noseworthy, Charlin,
  and Pineau}]{liu2016evaluate}
C.~Liu, R.~Lowe, I.~V. Serban, M.~Noseworthy, L.~Charlin, and J.~Pineau.
  2016{\natexlab{b}}.
\newblock How {NOT} to evaluate your dialogue system: An empirical study of
  unsupervised evaluation metrics for dialogue response generation.
\newblock In {\em Empirical Methods in Natural Language Processing (EMNLP)\/}.

\bibitem[{Lowe et~al.(2017{\natexlab{a}})Lowe, Noseworthy, Serban,
  Angelard-Gontier, Bengio, and Pineau}]{lowe2017towards}
R.~Lowe, M.~Noseworthy, I.~V. Serban, N.~Angelard-Gontier, Y.~Bengio, and
  J.~Pineau. 2017{\natexlab{a}}.
\newblock Towards an automatic turing test: Learning to evaluate dialogue
  responses.
\newblock In {\em Association for Computational Linguistics (ACL)\/}.

\bibitem[{Lowe et~al.(2017{\natexlab{b}})Lowe, Pow, Serban, Charlin, Liu, and
  Pineau}]{lowe2017ubuntu}
R.~T. Lowe, N.~Pow, I.~Serban, L.~Charlin, C.~Liu, and J.~Pineau.
  2017{\natexlab{b}}.
\newblock Training end-to-end dialogue systems with the ubuntu dialogue corpus.
\newblock {\em Dialogue and Discourse\/} 8.

\bibitem[{Manning et~al.(2014)Manning, Surdeanu, Bauer, Finkel, Bethard, and
  McClosky}]{manning2014stanford}
C.~D. Manning, M.~Surdeanu, J.~Bauer, J.~Finkel, S.~J. Bethard, and
  D.~McClosky. 2014.
\newblock The stanford core{NLP} natural language processing toolkit.
\newblock In {\em ACL system demonstrations\/}.

\bibitem[{Mnih et~al.(2008)Mnih, Szepesv{'{a}}ri, and
  Audibert}]{mnih2008empirical}
V.~Mnih, C.~Szepesv{'{a}}ri, and J.~Audibert. 2008.
\newblock Empirical berstein stopping.
\newblock In {\em International Conference on Machine Learning (ICML)\/}.

\bibitem[{Nallapati et~al.(2016)Nallapati, Zhou, Gulcehre, Xiang
  et~al.}]{nallapati2016abstractive}
R.~Nallapati, B.~Zhou, C.~Gulcehre, B.~Xiang, et~al. 2016.
\newblock Abstractive text summarization using sequence-to-sequence rnns and
  beyond.
\newblock {\em arXiv preprint arXiv:1602.06023\/} .

\bibitem[{Nguyen et~al.(2016)Nguyen, Rosenberg, Song, Gao, Tiwary, Majumder,
  and Deng}]{nguyen2016ms}
T.~Nguyen, M.~Rosenberg, X.~Song, J.~Gao, S.~Tiwary, R.~Majumder, and L.~Deng.
  2016.
\newblock {MS MARCO}: A human generated machine reading comprehension dataset.
\newblock In {\em Workshop on Cognitive Computing at NIPS\/}.

\bibitem[{Novikova et~al.(2017)Novikova, Dušek, Curry, and
  Rieser}]{novikova2017why}
J.~Novikova, O.~Dušek, A.~C. Curry, and V.~Rieser. 2017.
\newblock Why we need new evaluation metrics for {NLG}.
\newblock In {\em Empirical Methods in Natural Language Processing (EMNLP)\/}.

\bibitem[{Pagliardini et~al.(2017)Pagliardini, Gupta, and
  Jaggi}]{pagliardini2017unsupervised}
M.~Pagliardini, P.~Gupta, and M.~Jaggi. 2017.
\newblock Unsupervised learning of sentence embeddings using compositional
  n-gram features.
\newblock {\em arXiv\/} .

\bibitem[{Paisley et~al.(2012)Paisley, Blei, and
  Jordan}]{paisley2012variational}
J.~Paisley, D.~M. Blei, and M.~I. Jordan. 2012.
\newblock Variational {B}ayesian inference with stochastic search.
\newblock In {\em International Conference on Machine Learning (ICML)\/}. pages
  1363--1370.

\bibitem[{Papineni et~al.(2002)Papineni, Roukos, Ward, and
  Zhu}]{papineni02bleu}
K.~Papineni, S.~Roukos, T.~Ward, and W.~Zhu. 2002.
\newblock {BLEU}: A method for automatic evaluation of machine translation.
\newblock In {\em Association for Computational Linguistics (ACL)\/}.

\bibitem[{Passonneau and Carpenter(2014)}]{passonneau2014benefits}
R.~J. Passonneau and B.~Carpenter. 2014.
\newblock The benefits of a model of annotation.
\newblock In {\em Association for Computational Linguistics (ACL)\/}.

\bibitem[{Paulus et~al.(2018)Paulus, Xiong, and Socher}]{paulus2018deep}
R.~Paulus, C.~Xiong, and R.~Socher. 2018.
\newblock A deep reinforced model for abstractive summarization.
\newblock In {\em International Conference on Learning Representations
  (ICLR)\/}.

\bibitem[{Ranganath et~al.(2014)Ranganath, Gerrish, and
  Blei}]{ranganath2014black}
R.~Ranganath, S.~Gerrish, and D.~Blei. 2014.
\newblock Black box variational inference.
\newblock In {\em Artificial Intelligence and Statistics (AISTATS)\/}. pages
  814--822.

\bibitem[{Ripley(2009)}]{ripley2009stochastic}
B.~D. Ripley. 2009.
\newblock {\em Stochastic simulation\/}.
\newblock John Wiley \& Sons.

\bibitem[{See et~al.(2017)See, Liu, and Manning}]{see2017point}
A.~See, P.~J. Liu, and C.~D. Manning. 2017.
\newblock Get to the point: Summarization with pointer-generator networks.
\newblock In {\em Association for Computational Linguistics (ACL)\/}.

\bibitem[{Snover et~al.(2006)Snover, Dorr, Schwartz, Micciulla, and
  Makhoul}]{snover2006ter}
M.~Snover, B.~Dorr, R.~Schwartz, L.~Micciulla, and J.~Makhoul. 2006.
\newblock A study of translation edit rate with targeted human annotation.
\newblock In {\em Association for Machine Translation in the Americas\/}. pages
  223--231.

\bibitem[{Tan et~al.(2018)Tan, Wei, Yang, Lv, and Zhou}]{tan2018s}
C.~Tan, F.~Wei, N.~Yang, W.~Lv, and M.~Zhou. 2018.
\newblock {S}-{N}et: From answer extraction to answer generation for machine
  reading comprehension.
\newblock In {\em Association for the Advancement of Artificial Intelligence
  (AAAI)\/}.

\bibitem[{Vedantam et~al.(2015)Vedantam, Zitnick, and
  Parikh}]{vedantam2015cider}
R.~Vedantam, C.~L. Zitnick, and D.~Parikh. 2015.
\newblock {CIDEr}: Consensus-based image description evaluation.
\newblock In {\em Computer Vision and Pattern Recognition (CVPR)\/}. pages
  4566--4575.

\bibitem[{Weissenborn et~al.(2017)Weissenborn, Wiese, and
  Seiffe}]{weissenborn2017fastqa}
D.~Weissenborn, G.~Wiese, and L.~Seiffe. 2017.
\newblock Making neural {QA} as simple as possible but not simpler.
\newblock In {\em Computational Natural Language Learning (CoNLL)\/}.

\bibitem[{Wu et~al.(2016)Wu, Schuster, Chen, Le, Norouzi, Macherey, Krikun,
  Cao, Gao, Macherey et~al.}]{wu2016google}
Y.~Wu, M.~Schuster, Z.~Chen, Q.~V. Le, M.~Norouzi, W.~Macherey, M.~Krikun,
  Y.~Cao, Q.~Gao, K.~Macherey, et~al. 2016.
\newblock Google's neural machine translation system: Bridging the gap between
  human and machine translation.
\newblock {\em arXiv preprint arXiv:1609.08144\/} .

\end{thebibliography}
\bibliographystyle{acl_natbib}

\appendix
\clearpage
\section{\label{sec:interfaces} Crowdsourcing data collection}

In this section, we provide details regarding our the design of our annotation interfaces and the quality control measures we took.

\subsection{Language quality evaluation.}
Each human annotator was shown a short summary that was generated by a system from an article in the CNN/Daily Mail dataset or provided as a reference for that article.
The annotators were then asked to (a) provide Likert scale ratings of the summary on multiple facets (fluency, redundancy and overall quality) and (b) perform post-edits to correct any errors (\reffig{lqual-interface}).

\paragraph{Interface design choices.}
We found that using a five-level Likert scale increased annotator variance as annotators relative to a three-level Likert scale.
Annotators were provided specific cues to calibrate their Likert ratings through a tutorial and were reminded of these cues through tooltips on the rating buttons (see \reffig{lqual-tutorial} for an example).
If the annotators rated a summary as lacking along any facet, they were then forced to perform post-edits to ``improve [its] quality as much as possible''.
We found that forcing annotators to provide post-edits on examples significantly decreased the annotator variance even on the Likert ratings.

\begin{figure*}
  \begin{subfigure}{\textwidth}
    \centering
    \includegraphics[width=0.8\textwidth]{figures/edit}
    \caption{\label{fig:lqual-interface}}
  \end{subfigure}
  \begin{subfigure}{\textwidth}
    \centering
    \includegraphics[width=0.8\textwidth]{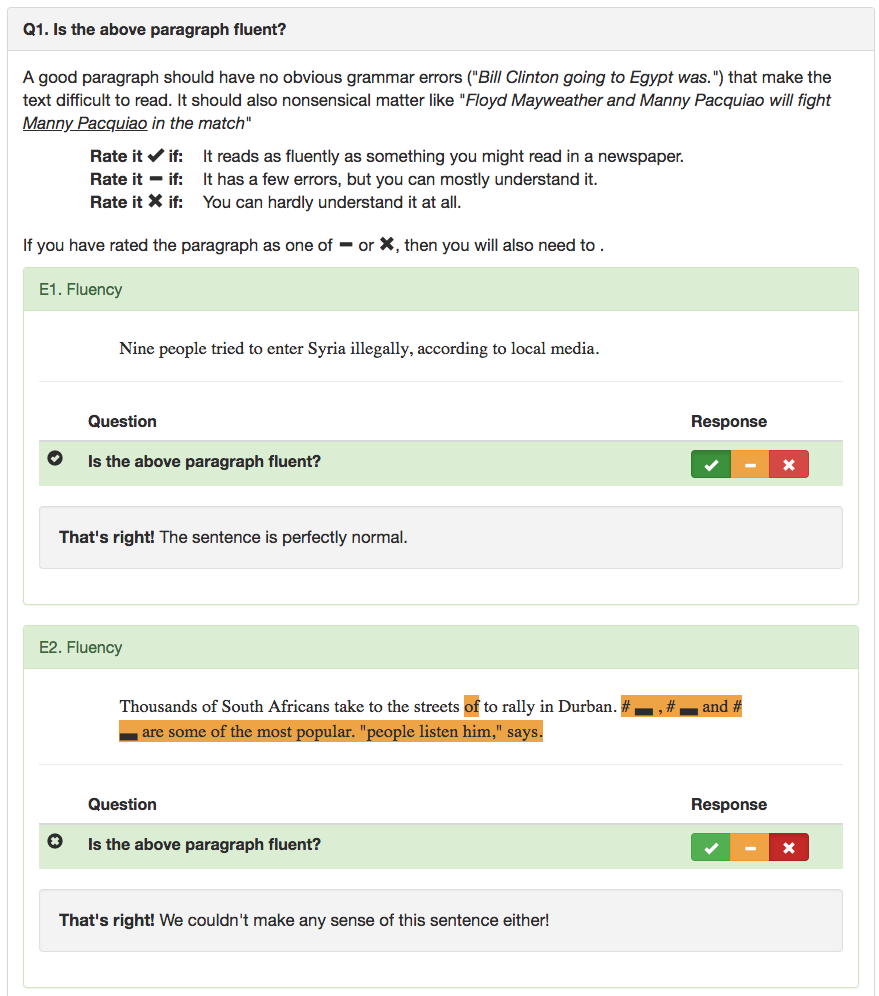}
    \caption{\label{fig:lqual-tutorial}}
  \end{subfigure}
  \caption{Screenshot of the (a) interface and (b) instructions used by crowdworkers for the language quality evaluation task on the CNN/Daily Mail dataset.}
\end{figure*}

\begin{figure*}
  \begin{subfigure}{\textwidth}
    \centering
    \includegraphics[width=0.8\textwidth]{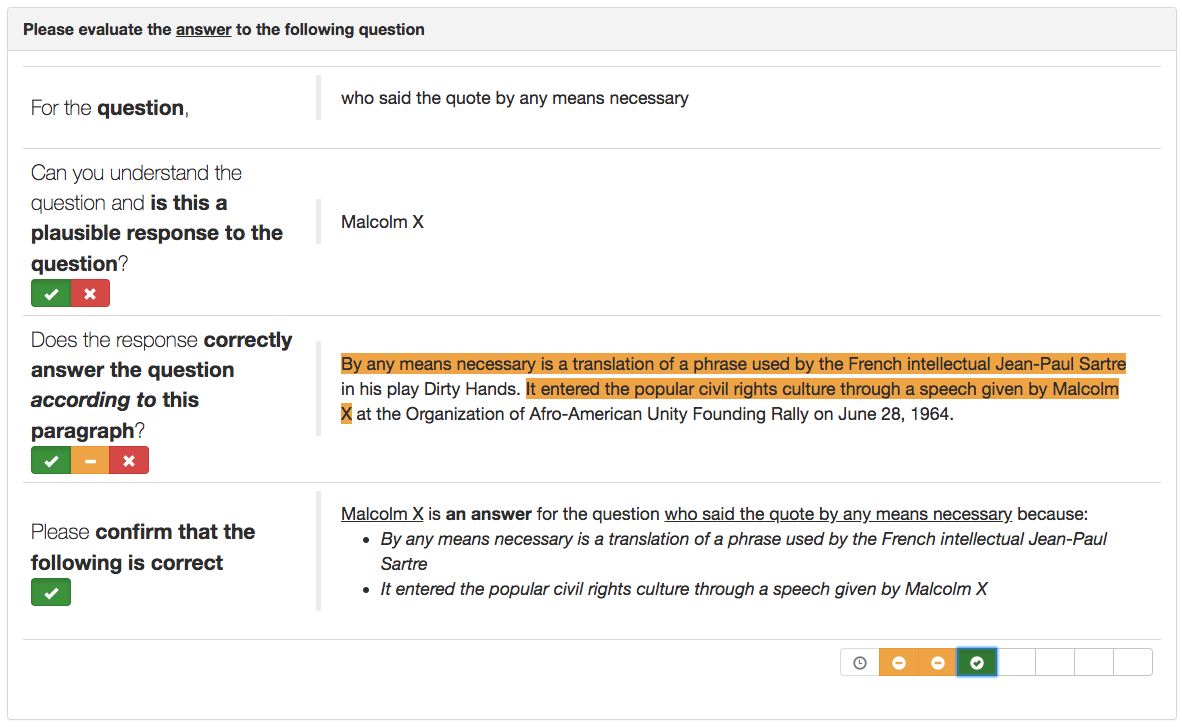}
    \caption{\label{fig:msmarco-interface}}
  \end{subfigure}
  \begin{subfigure}{\textwidth}
    \centering
    \includegraphics[width=0.8\textwidth]{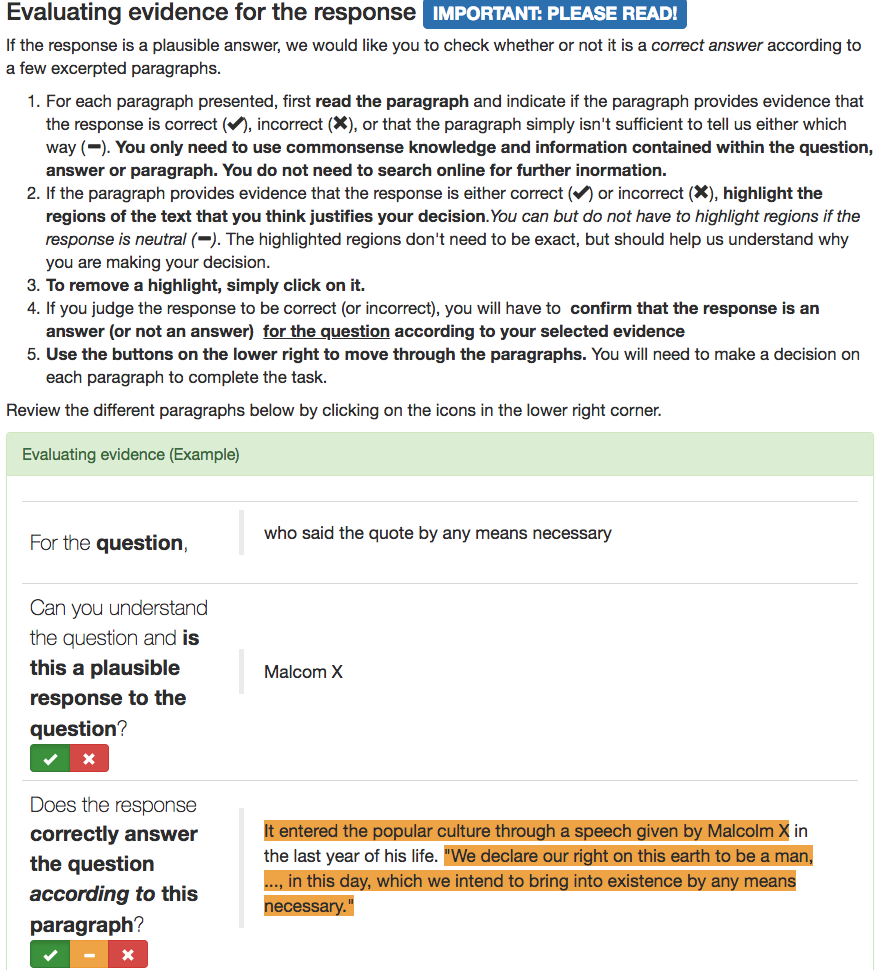}
    \caption{\label{fig:msmarco-tutorial}}
  \end{subfigure}
  \caption{Screenshot of the (a) interface and (b) instructions used by crowdworkers for the answer correctness evaluation task on the MS MARCO dataset.}
\end{figure*}

Following the recommendations of \citet{liu2016effective}, we forced annotators to complete an interactive tutorial containing 10 questions each before beginning the task (\reffig{lqual-tutorial}).
The tutorial provided guidelines and examples on how to rate each facet (fluency, redundancy and overall quality) and tested whether they were able to identify and correct language errors using the post-editing interface.
The tutorial took about 5--6 minutes to complete and annotators were paid a one-time bonus of \$0.75 on completion.

We initially included additional questions to assess focus, coherency and referential clarity adapted from the DUC evaluation guidelines~\citep{dang2006overview}, but found that annotators were unable to reliably identify these errors in the short summaries.
We also experimented with asking annotators to highlight language errors in the text to justify their ratings, but again found that annotators were unable to localize these errors reliably.

\paragraph{Quality control measures.}

We initially attempted to use attention-check examples for the Likert rating questions, but found that the ratings on these examples were themselves quite subjective and hence were not a reliable signal to reject work.
Instead, we found that requiring post-edits to summaries significantly reduced spam.
Additionally, we rejected annotators who took too little time to complete the task, had very low agreement rates on the Likert questions or had edits that were consistently shorter than 5 characters to prevent spam.

\subsection{Answer correctness evaluation.}
Each annotator was shown a question from the MS MARCO dataset and an answer that was generated by a system or provided as a reference answer from the dataset.
The annotators were then asked to (a) rate if the question made sense and the answer was plausibly correct and (b) asked to identify which paragraphs provided in the dataset justified the answer (\reffig{msmarco-interface}).

\paragraph{Interface design choices.}
We found that some of the questions in the MS MARCO dataset were extremely ambiguous (e.g.\ ``metatarsal what causes'') and some system responses were implausible (e.g\ ``monogenic bone diseases'', for the question ``what genes cause osteoporosis'').
In these cases, annotators expressed confusion if they were forced to judge if the response was correct or incorrect.
We resolved this confusion by first asking annotators if the question made sense and if system response was even plausible.

In early pilots, we found that annotators often rated a paragraph that correctly answered the question but was unrelated to the system response to be ``correct''.
We were able to resolve this problem by asking annotators to double-check their work (see the last question in \reffig{msmarco-interface} for an example).

Once again, we forced annotators to complete an interactive tutorial containing eight questions each before beginning the task (\reffig{msmarco-tutorial}).
The tutorial also took about 5--6 minutes to complete and annotators were paid a one-time bonus of \$0.75 on completion.

\paragraph{Quality control measures.}
We found that requiring annotators to provide justification spans significantly spam.
Additionally, we rejected annotators who took too little time to complete the task or had very low agreement rates on the answer correctness.

\onecolumn

\section{\label{sec:proofs}Proofs}

In this section, we provide proofs for the theorems stated in the main paper.

\subsection{Main Theorem}

In this section, we prove the main theorem (\refthm{main}) in the paper about the minimax optimal variance for an unbiased estimator. \refthm{main} will follow from the two following lemmas (Lemmas~\ref{lem:variance_calc} and~\ref{lem:mvue}). First, we show in \reflem{variance_calc} that for all distributions with fixed $\sigma^2_f$, $\sigma^2_a$ and $\rho$, the variance of $\mucontrol$ is constant and equal to: $\frac{1}{n}(\sigma_f^2(1-\rho^2)+\sigma_a^2)$.
Then we give an explicit distribution, a Gaussian distribution, where \emph{any} estimator yields at least this variance using the theory of sufficient statistics.
Together, these show that the max variance of any estimator is at least the max variance of $\mucontrol$.

As a reminder, the estimator is 

\begin{align}
\mucontrol = \frac{1}{n} \sum_i y^{(i)} - \alpha g(z^{(i)})
\end{align}

where $\alpha = \Cov(f(z),g(z))$.

\begin{lemma}
\label{lem:variance_calc}
The variance of $\mucontrol$ is always

\begin{align}
\frac{1}{n}(\sigma^2_f (1-\rho^2) + \sigma^2_a)
\end{align}

\end{lemma}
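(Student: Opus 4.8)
The plan is to reduce the $n$-sample variance to a single-sample second-moment calculation and then apply the law of total variance. Since $z^{(1)},\dots,z^{(n)}$ are drawn i.i.d.\ from $\sZ$ and, conditioned on them, the judgments $y^{(i)} = Y(z^{(i)})$ are drawn independently, the summands $y^{(i)} - \alpha g(z^{(i)})$ are i.i.d. Hence $\Var(\mucontrol) = \inv{n}\Var\big(Y(z) - \alpha g(z)\big)$ for a single fresh draw $z$, and it suffices to show $\Var(Y(z) - \alpha g(z)) = \sigma_f^2(1-\rho^2) + \sigma_a^2$.

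First I would condition on $z$ and use the law of total variance: $\Var(Y(z) - \alpha g(z)) = \E_z[\Var(Y(z) - \alpha g(z) \given z)] + \Var_z(\E[Y(z) - \alpha g(z)\given z])$. Conditioned on $z$, the term $\alpha g(z)$ is constant, so the inner variance is just $\Var(Y(z)\given z)$, whose expectation over $z$ is exactly $\sigma_a^2$ by definition. Likewise $\E[Y(z) - \alpha g(z)\given z] = f(z) - \alpha g(z)$, so the second term is $\Var_z(f(z) - \alpha g(z))$.

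Next I would expand this last variance by bilinearity: $\Var_z(f(z) - \alpha g(z)) = \Var(f(z)) - 2\alpha\Cov(f(z),g(z)) + \alpha^2\Var(g(z))$. Here I invoke the two normalization/definitional facts already set up in \refsec{method}: $g$ has been standardized so that $\Var(g(z)) = 1$, and $\alpha \eqdef \Cov(f(z),g(z))$. Substituting gives $\sigma_f^2 - 2\alpha^2 + \alpha^2 = \sigma_f^2 - \alpha^2$, and since $\rho = \alpha/\sigma_f$ we have $\alpha^2 = \rho^2\sigma_f^2$, so this term equals $\sigma_f^2(1-\rho^2)$. Adding the two contributions yields $\sigma_f^2(1-\rho^2) + \sigma_a^2$; dividing by $n$ gives the claim.

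There is no substantive obstacle: the whole argument is a short variance computation. The only points needing care are the bookkeeping in the law of total variance (ensuring the conditioning is on $z$ alone, so that $\alpha g(z)$ drops out of the inner variance) and observing that the final expression depends on the underlying distribution \emph{only} through $\sigma_f^2$, $\sigma_a^2$, and $\alpha$ (equivalently $\rho$) — which is exactly the invariance that makes this lemma the first half of the minimax argument for \refthm{main}.
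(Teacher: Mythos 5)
Your proposal is correct and follows essentially the same route as the paper's proof: the law of total variance conditioning on $z$, with the inner term giving $\sigma_a^2$ and the outer term expanding $\Var(f(z)-\alpha g(z)) = \sigma_f^2 - 2\alpha^2 + \alpha^2 = \sigma_f^2(1-\rho^2)$ using $\Var(g(z))=1$ and $\rho = \alpha/\sigma_f$. The only cosmetic difference is that you first reduce to a single i.i.d.\ summand before applying total variance, whereas the paper applies it to the full $n$-sample estimator and carries the $1/n$ through each term.
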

\begin{proof}
By the law of total variance, with respect to the draws of $z^{(i)}$,

\begin{align}
\Var(\mucontrol) = \mathbb{E}_{z^{(i)}}[\Var(\mucontrol | z^{(i)})] + \Var_{z^{(i)}} ( \mathbb{E}[\mucontrol | z^{(i)} ] ) 
\end{align}

We will evaluate each of the two terms on the right hand side. 

For the first term,

\begin{align}
\mathbb{E}_{z^{(i)}}[\Var(\mucontrol | z^{(i)})] = \mathbb{E}_{z^{(i)}} \left[ \Var \left( \frac{1}{n} \sum_i y^{(i)} | z^{(i)} \right) \right]
\end{align}
Because the human responses $Y(z^{(i)})$ are uncorrelated,

\begin{align}
\mathbb{E}_{z^{(i)}}[\Var(\mucontrol | z^{(i)})] &= \mathbb{E}_{z^{(i)}} \left[ \frac{1}{n^2} \sum_i \Var( Y(z^{(i)}) ) | z^{(i)} \right] \\
&= \frac{1}{n} \mathbb{E}_z [\Var(Y(z))] \\
&= \frac{1}{n} \sigma^2_a
\end{align}

For the second term,

\begin{align}
\Var_{z^{(i)}}( \mathbb{E}[\mucontrol | z^{(i)}]) = \Var_{z^{(i)}} \left( \frac{1}{n} \sum_i f(z^{(i)}) - \alpha g(z^{(i)}) \right)
\end{align}

Because the $z^{(i)}$ are sampled independently,

\begin{align}
\Var_{z^{(i)}}( \mathbb{E}[\mucontrol | z^{(i)}]) &= \frac{1}{n} \Var( f(z) - \alpha g(z)) \\
&= \frac{1}{n} [\Var(f(z)) - 2 \alpha \Cov(f(z),g(z)) + \alpha^2 \Var(g(z))]
\end{align}

Note that $\Var(f(z))=\sigma^2_f$, $\Cov(f(z),g(z))=\alpha$, and $\Var(g(z))=1$ (since it is normalized). Thus,

\begin{align}
\Var_{z^{(i)}}( \mathbb{E}[\mucontrol | z^{(i)}]) &= \frac{1}{n} [\sigma^2_f - 2 \alpha^2 + \alpha^2] \\
&=\frac{1}{n} [\sigma^2_f - \alpha^2]
\end{align}

Since the correlation $\rho = \frac{\alpha}{\sigma_f \sigma_g} = \frac{\alpha}{\sigma_f}$,

\begin{align}
\Var_{z^{(i)}}( \mathbb{E}[\mucontrol | z^{(i)}]) &= \frac{1}{n} [\sigma^2_f - \sigma^2_f \rho^2] \\
&=\frac{1}{n} \sigma^2_f (1-\rho^2)
\end{align}

Putting these two terms together, we find that,

\begin{align}
\Var(\mucontrol) &= \frac{1}{n} \sigma^2_a + \frac{1}{n} \sigma^2_f (1-\rho^2) \\
&= \frac{1}{n}(\sigma^2_f (1 - \rho^2) + \sigma^2_a)
\end{align}
\end{proof}

For the next lemma, we show that the worst-case variance for any estimator is at least that of $\mucontrol$. For this, we will define a simple Gaussian distribution and use the theory of sufficient statistics. We explicitly define a distribution over $f(z)$, $g(z)$, and $Y(Z) - f(z)$. In particular, we assume these are all Gaussian distributions with respective means, $\mu, 0, 0$,  and variances, $\sigma^2_f, 1, \sigma^2_a$. Additionally, we assume that $f(z)$ and $g(z)$ have covariance $\alpha$ but $Y(z) - f(z)$ is independent.

\begin{lemma}
\label{lem:mvue}
$\mucontrol$ is the minimal variance unbiased estimate (MVUE) for the Gaussian distribution above.
\end{lemma}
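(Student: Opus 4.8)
The plan is to invoke the Lehmann--Scheffé theorem. Under the specified Gaussian model the only unknown parameter is $\mu$, so it suffices to exhibit a complete sufficient statistic for $\mu$ and then observe that $\mucontrol$ is an unbiased deterministic function of it. First I would write down the law of the observations. Under the stated model the pairs $w^{(i)} \eqdef (y^{(i)}, g(z^{(i)}))$, $i = 1, \dots, n$, are i.i.d.\ bivariate Gaussian with mean $m \eqdef (\mu, 0)$ and \emph{known} covariance $\Sigma = \left(\begin{smallmatrix}\sigma_f^2 + \sigma_a^2 & \alpha \\ \alpha & 1\end{smallmatrix}\right)$: the $(1,1)$ entry is $\Var(Y(z)) = \Var(f(z)) + \Var(Y(z) - f(z))$ by independence of the two pieces, the $(2,2)$ entry is $1$ because $g$ is normalized, and the off-diagonal entry is $\Cov(f(z), g(z)) = \alpha$ since the annotator noise is independent of $g$. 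The joint density is then proportional to $\exp\!\big(-\tfrac12\sum_i (w^{(i)} - m)^\top \Sigma^{-1}(w^{(i)} - m)\big)$, and expanding the quadratic, the only factor depending on $\mu$ is $\exp\!\big(\mu\, e_1^\top \Sigma^{-1}\sum_i w^{(i)} - \tfrac{n}{2}\mu^2 (\Sigma^{-1})_{11}\big)$, where $e_1 = (1,0)^\top$.

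Next I would read off the sufficient statistic and argue completeness. By Fisher--Neyman factorization, $T \eqdef e_1^\top \Sigma^{-1}\sum_i w^{(i)} = (\Sigma^{-1})_{11}\sum_i y^{(i)} + (\Sigma^{-1})_{12}\sum_i g(z^{(i)})$ is sufficient for $\mu$; moreover the display above exhibits the model as a one-parameter exponential family with natural parameter $\mu$ ranging over the open set $\R$ and natural statistic $T$, so $T$ is complete. The subtlety worth flagging is that the \emph{pair} $\big(\sum_i y^{(i)}, \sum_i g(z^{(i)})\big)$, although jointly sufficient, is \emph{not} complete --- $\sum_i g(z^{(i)})$ has mean $0$ for every $\mu$ --- so it is precisely the one-dimensional combination $T$ that one must use.

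Finally I would identify $\mucontrol$ with $T$. From $\Sigma^{-1} = \tfrac{1}{\det\Sigma}\left(\begin{smallmatrix}1 & -\alpha \\ -\alpha & \sigma_f^2+\sigma_a^2\end{smallmatrix}\right)$ we get $(\Sigma^{-1})_{12}/(\Sigma^{-1})_{11} = -\alpha$, hence $T = (\Sigma^{-1})_{11}\sum_i\big(y^{(i)} - \alpha g(z^{(i)})\big) = (\Sigma^{-1})_{11}\, n\,\mucontrol$, so $\mucontrol$ is a deterministic function of the complete sufficient statistic $T$. Since $\E[\mucontrol] = \mu$ (because $\E[y^{(i)}] = \mu$ and $\E[g(z^{(i)})] = 0$), Lehmann--Scheffé gives that $\mucontrol$ is the unique MVUE, which proves the lemma; combined with \reflem{variance_calc} this also pins down its variance, and that is what \refthm{main} needs once one notes that the worst-case variance of any estimator is at least its variance on this particular Gaussian instance. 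I expect the only real obstacle to be presenting the completeness argument crisply --- in particular, being explicit about why the raw pair of sums is not the right object for the Lehmann--Scheffé step and the one-dimensional statistic $T$ is.
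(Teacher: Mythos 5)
Your proposal is correct and follows essentially the same route as the paper's proof: write the joint density of the pairs $(y^{(i)}, g(z^{(i)}))$ as i.i.d.\ bivariate Gaussians, use Fisher--Neyman factorization to show the $\mu$-dependent part of the exponent involves the data only through a statistic proportional to $\mucontrol$, check unbiasedness and completeness, and conclude via Lehmann--Scheff\'e. Your completeness argument (one-parameter exponential family with natural parameter ranging over an open set, plus the remark that the raw pair of sums is not complete) is a slightly crisper justification of a step the paper states rather tersely, but it is the same argument in substance.
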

\begin{proof}
The proof is straightforward: we first show that $\mucontrol$ is a sufficient statistic using the Fisher-Neyman factorization theorem, and then we apply the Lehman-Scheffe theorem.

For ease of notation, define $g_i = g(z^{(i)})$ and $y_i = y^{(i)}$. For the purposes of statistics, only $\mu$ is a parameter; the other ``parameters'' are known constants. 
Note that the pdf of the observed variables $g_i$ and $y_i$ is,

\begin{align}
\prod_i c_1 \exp(-\frac{1}{2} 
\begin{bmatrix}
(y_i - \mu) \\
g_i \\
\end{bmatrix}^T
\begin{bmatrix}
\sigma^2_f + \sigma^2_a & \alpha \\
\alpha & 1 \\
\end{bmatrix}^{-1}
\begin{bmatrix}
(y_i - \mu) \\
g_i \\
\end{bmatrix})
\end{align}

\begin{align}
=c_2 \exp(-\frac{1}{2} \sum_i
\begin{bmatrix}
(y_i - \mu) \\
g_i \\
\end{bmatrix}^T
\begin{bmatrix}
\sigma^2_f + \sigma^2_a & \alpha \\
\alpha & 1 \\
\end{bmatrix}^{-1}
\begin{bmatrix}
(y_i - \mu) \\
g_i \\
\end{bmatrix})
\end{align}

Thus, with the Fisher-Neyman factorization theorem, it suffices to show that the exponetiated term $T$ decomposes as a sum of a function that only depends on the data and a function that only depends on $\mucontrol$ and $\mu$.

\begin{align}
  T &=
\sum_i
\begin{bmatrix}
(y_i - \mu) \\
g_i \\
\end{bmatrix}^T
\begin{bmatrix}
\sigma^2_f + \sigma^2_a & \alpha \\
\alpha & 1 \\
\end{bmatrix}^{-1}
\begin{bmatrix}
(y_i - \mu) \\
g_i \\
\end{bmatrix}
\end{align}

Letting $c_3$ be the inverse determinant (which is constant),

\begin{align}
T &= c_3 \sum_i
\begin{bmatrix}
(y_i - \mu) \\
g_i \\
\end{bmatrix}^T
\begin{bmatrix}
1 & -\alpha \\
-\alpha & \sigma^2_f + \sigma^2_a \\
\end{bmatrix}
\begin{bmatrix}
(y_i - \mu) \\
g_i \\
\end{bmatrix} \\
&= c_3 \left[ \sum_i (y_i - \mu)^2 - 2 \alpha \sum_i (y_i - \mu) g_i + (\sigma^2_f + \sigma^2_a) \sum_i g_i^2 \right] \\
&= c_3 \left[ \sum_i y_i^2 - 2 \mu \sum_i y_i + n \mu^2 - 2 \alpha \sum_i y_i g_i + 2 \alpha \mu \sum_i g_i  + (\sigma^2_f + \sigma^2_a) \sum_i g_i^2 \right] \\
&=  -2c_3 \mu \left[\sum_i y_i -  \alpha \sum_i g_i \right]  + c_3n\mu^2 +  c_3 \left[ \sum_i y_i^2 - 2 \alpha \sum_i y_i g_i + (\sigma^2_f + \sigma^2_a) \sum_i g_i^2 \right] \\
&=  -2 n c_3 \mu \mucontrol  +  c_3n\mu^2 + c_3\left[ \sum_i y_i^2 - 2 \alpha \sum_i y_i g_i + (\sigma^2_f + \sigma^2_a) \sum_i g_i^2 \right]
\end{align}

Thus, we see the decomposition into the function of only the data on the right and only $\mu$ and $\mucontrol$ on the left. Thus, $\mucontrol$ is a sufficient statistic.

Further, $\mucontrol$ is an unbiased estimate of $\mu$ since $\mathbb{E}[g_i]=0$ and $\mathbb{E}[y_i] = \mu$. 

Further, since $\mucontrol$ is normally distributed with mean dependent on $\mu$, it is complete. 

Thus, by the Lehmann-Scheffe theorem, $\mucontrol$ is the minimal variance unbiased estimate (MVUE).

\end{proof}

\begin{thm} [\ref{thm:main}]
Among all unbiased
  estimators that are functions of $y^{(i)}$ and $g(z^{(i)})$, and for all distributions with a given $\sigma^2_f$, $\sigma^2_a$, and $\alpha$,
\begin{align}
  \Var(\mucontrol) = \frac{1}{n} (\sigma^2_f (1 - \rho^2) + \sigma^2_a),
\end{align}
and no other estimator has a lower worst-case variance.
\end{thm}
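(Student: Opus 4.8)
The plan is to split Theorem~\ref{thm:main} into two independent claims, each handled by one lemma. The first claim is a pure variance computation: for \emph{every} joint distribution on $(f(z),g(z),Y(z))$ consistent with the prescribed values $\sigma^2_f$, $\sigma^2_a$, $\alpha$ (hence $\rho=\alpha/\sigma_f$, using $\Var(g)=1$), the estimator $\mucontrol$ has variance exactly $\frac{1}{n}(\sigma^2_f(1-\rho^2)+\sigma^2_a)$. The second claim is the matching lower bound: no unbiased estimator that is a function of the observed $\{(y^{(i)},g(z^{(i)}))\}_{i=1}^n$ can have smaller worst-case variance. Since the first claim says $\mucontrol$'s variance is \emph{constant} over the whole family, its worst case equals that constant; so it suffices to exhibit \emph{one} member of the family on which every unbiased estimator has variance at least that constant. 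Then the worst-case variance of any competitor is $\ge$ the worst-case variance of $\mucontrol$, which is the theorem.

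For the first claim I would condition on the sampled points $z^{(1)},\dots,z^{(n)}$ and use the law of total variance. The ``within'' term: given the $z^{(i)}$, only the $y^{(i)}=Y(z^{(i)})$ are random, they are independent across $i$ with variance $\Var(Y(z^{(i)}))$, so after averaging over $z$ this contributes $\frac{1}{n}\sigma^2_a$ by definition of $\sigma^2_a=\E_z[\Var(Y(z))]$. The ``between'' term: $\E[\mucontrol\mid z^{(i)}]=\frac1n\sum_i\big(f(z^{(i)})-\alpha g(z^{(i)})\big)$, and since the $z^{(i)}$ are i.i.d.\ this has variance $\frac1n\Var(f(z)-\alpha g(z))=\frac1n(\sigma^2_f-2\alpha^2+\alpha^2)=\frac1n(\sigma^2_f-\alpha^2)=\frac1n\sigma^2_f(1-\rho^2)$, using $\Var(g)=1$ and $\Cov(f,g)=\alpha$. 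Adding gives the stated formula. This step is routine and depends only on the three prescribed moments, which is exactly why the variance is constant across the family.

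For the lower bound I would specialize to a Gaussian member of the family: take $f(z)\sim\sN(\mu,\sigma^2_f)$, $g(z)\sim\sN(0,1)$ with $\Cov(f,g)=\alpha$, and $Y(z)-f(z)\sim\sN(0,\sigma^2_a)$ independent of everything. Then the observables $(y_i,g_i)$ are i.i.d.\ bivariate Gaussian with mean $(\mu,0)$ and a fixed, known covariance matrix; only $\mu$ is unknown. Writing out the joint density and applying the Fisher--Neyman factorization theorem, the sufficient statistic collapses to the linear combination $\frac1n\sum_i(y_i-\alpha g_i)=\mucontrol$ (the cross terms in the quadratic form produce exactly $-2\mu\sum_i(y_i-\alpha g_i)$ after expanding, with everything else either data-only or $\mu$-only). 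Since $\mucontrol$ is unbiased for $\mu$ and, being Gaussian with $\mu$-dependent mean, is also complete, the Lehmann--Scheff\'e theorem makes it the unique MVUE for this distribution; hence every unbiased estimator has variance $\ge\Var(\mucontrol)$ on this instance.

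The main obstacle is the lower-bound lemma, specifically making the Fisher--Neyman factorization go through cleanly: one has to invert the $2\times2$ covariance matrix $\begin{psmallmatrix}\sigma^2_f+\sigma^2_a&\alpha\\\alpha&1\end{psmallmatrix}$, expand the resulting quadratic form $\sum_i(y_i-\mu,\,g_i)\,\Sigma^{-1}(y_i-\mu,\,g_i)^\top$, and verify that after collecting terms the only piece coupling $\mu$ to the data is proportional to $\mu\cdot\mucontrol$, with the remaining $\mu$-dependence being a pure $n\mu^2$ term. The off-diagonal $-\alpha$ entry of $\Sigma^{-1}$ is what conveniently supplies the $-\alpha\sum_i g_i$ inside the sufficient statistic, matching the definition of $\mucontrol$; checking completeness is then immediate since the statistic is a non-degenerate Gaussian location family. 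Everything else---unbiasedness of $\mucontrol$, the variance arithmetic---is bookkeeping.
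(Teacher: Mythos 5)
Your proposal is correct and follows essentially the same route as the paper's proof: the same law-of-total-variance computation showing $\Var(\mucontrol)$ is constant over the family, and the same Gaussian construction with Fisher--Neyman factorization plus Lehmann--Scheff\'e to show $\mucontrol$ is the MVUE on that instance, yielding the minimax lower bound. No gaps to report.
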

\begin{proof}
From \reflem{variance_calc} we have that the max variance of $\mucontrol$ over all distributions with fixed variances, is exactly,

\begin{align}
\frac{1}{n} (\sigma^2_f (1 - \rho^2) + \sigma^2_a)
\end{align}

Further, from \reflem{mvue}, we know that $\mucontrol$ is the MVUE for a particular class of distributions, thus, any estimator has a larger max variance over all distributions. 

Combining these two facts, we get that the minimax variance is the variance of $\mucontrol$.
\end{proof}

\subsection{Added Bias}

\begin{prop}[\ref{prop:added_bias}]
The estimator in Algorithm \ref{alg:estimate} has $O(1/n)$ bias.
\end{prop}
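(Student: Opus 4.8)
The plan is to compare $\widetilde\mu$ in Algorithm~\ref{alg:estimate} against the idealized estimator $\mucontrol$, which is unbiased by construction, and to control the discrepancy term that arises because we plug in $\hat\alpha$ in place of the true $\alpha = \Cov(f(z),g(z))$. Writing $\bar g = \frac1n\sum_i g(z^{(i)})$, observe that
\begin{align}
\widetilde\mu = \frac1n\sum_i y^{(i)} - \hat\alpha\,\bar g = \mucontrol + (\alpha - \hat\alpha)\,\bar g,
\end{align}
so $\E[\widetilde\mu] - \mu = \E[(\alpha-\hat\alpha)\bar g]$. The task thus reduces to showing this single expectation is $O(1/n)$.

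First I would compute $\E[\bar g] = \E[g(z)] = 0$, since $g$ is normalized to zero mean; this is why there is no $O(1)$ bias. Next I would expand $\hat\alpha = \frac1n\sum_i (y^{(i)} - \bar y) g(z^{(i)}) = \frac1n\sum_i y^{(i)} g(z^{(i)}) - \bar y\,\bar g$. So $(\alpha - \hat\alpha)\bar g$ is a sum of terms each of which is a product of sample averages. Taking expectations, terms like $\E[\bar y\,\bar g^2]$ and $\E[(\frac1n\sum_i y^{(i)}g(z^{(i)}))\bar g]$ need to be evaluated: when the indices coincide one gets a factor $1/n$, and when they do not the independence of the $z^{(i)}$ together with $\E[\bar g]=0$ on the ``extra'' factor causes the cross terms either to vanish or to combine with $\E[g(z)] = 0$. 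The upshot is that $\E[(\alpha-\hat\alpha)\bar g]$ is a finite linear combination of moments of $(f(z),g(z),Y(z))$ each multiplied by $1/n$, hence $O(1/n)$, provided those moments (e.g.\ $\E[g(z)^2]=1$, $\E[f(z)g(z)^2]$, $\E[g(z)^2\Var(Y(z))]$, etc.) are finite.

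The main obstacle is purely bookkeeping: carefully tracking which products of empirical averages survive in expectation and checking that every surviving term carries at least one factor of $1/n$ rather than an $O(1)$ contribution. A clean way to organize this is to note that $\hat\alpha$ is a $U$-statistic-like quantity with $\E[\hat\alpha] = \alpha - \frac1n\alpha = \alpha\,(1 - \tfrac1n)$ (the bias of the plug-in covariance), and more generally $\hat\alpha = \alpha + O_P(1/\sqrt n)$ with an $O(1/n)$ bias; combined with $\bar g = O_P(1/\sqrt n)$ and $\E[\bar g]=0$, the product $\E[(\alpha-\hat\alpha)\bar g]$ is the covariance of two mean-zero $O_P(1/\sqrt n)$ quantities, which is $O(1/n)$. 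I would present the direct moment expansion since it makes the finite-moment assumptions explicit, but either route yields the claim.
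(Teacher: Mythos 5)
Your proposal is correct and takes essentially the same route as the paper: after reducing the bias to $\E[\hat{\alpha}\,\bar g]$ (you arrive there via the unbiasedness of $\mucontrol$, the paper writes it directly as $\frac{1}{n}\sum_i \E[\hat{\alpha}\, g(z^{(i)})]$), both arguments expand $\hat{\alpha}$, use independence of the $z^{(i)}$ together with $\E[g(z)]=0$ to eliminate the cross-index terms, and note that every surviving moment carries at least a $1/n$ factor. The only caveat is that your parenthetical shortcut (the covariance of two $O_P(1/\sqrt{n})$ quantities being $O(1/n)$) is not rigorous without moment conditions, but you correctly plan to present the direct moment expansion, which is exactly the paper's proof.
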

\begin{proof}

The bias $B$ is

\begin{align}
B &= \left| \mathbb{E}[\widetilde{\mu}] - \mu \right| \\
&= \left| \mathbb{E}[\frac{1}{n} \sum_i y^{(i)} - \hat{\alpha} g(z^{(i)})] - \mu \right|
\end{align}

Since $\mathbb{E}[y^{(i)}] = \mu$,

\begin{align}
B&= \left| \mu - \frac{1}{n} \sum_i \mathbb{E}[\hat{\alpha} g(z^{(i)})] - \mu \right| \\
&=\left| \frac{1}{n} \sum_i \mathbb{E}[\hat{\alpha} g(z^{(i)})] \right| \\
&=\left| \frac{1}{n^2} \sum_{i,j} \mathbb{E}[(y^{(j)} - \overline{y}) g(z^{(j)}) g(z^{(i)})]\right| \\
&=\left| \frac{1}{n^2} \sum_{i,j} \mathbb{E}[y^{(j)} g(z^{(j)}) g(z^{(i)})] - \frac{1}{n^3} \sum_{i,j,k} \mathbb{E}[y^{(k)} g(z^{(j)}) g(z^{(i)})] \right| 
\end{align}

Because $Y(z)$ is independent and has mean $f(z)$,

\begin{align}
B=\left| \frac{1}{n^2} \sum_{i,j} \mathbb{E}[f(z^{(j)}) g(z^{(j)}) g(z^{(i)})] - \frac{1}{n^3} \sum_{i,j,k} \mathbb{E}[f(z^{(k)}) g(z^{(j)}) g(z^{(i)})] \right| 
\end{align}

Because $g(z)$ is mean zero and the $z^{(i)}$ are drawn independently,

\begin{align}
B &= \left| \frac{1}{n^2} \sum_{i} \mathbb{E}[f(z^{(i)}) g(z^{(i)})^2] - \frac{1}{n^3} \sum_{i,k} \mathbb{E}[f(z^{(k)}) g(z^{(i)})^2] \right| \\
&= \left| \frac{1}{n^2} \sum_{i} O(1) - \frac{1}{n^3} \sum_{i,k} O(1) \right| \\
&= \left| \frac{1}{n^2} O(n) - \frac{1}{n^3} O(n^2) \right| \\
&= \left| O \left(\frac{1}{n} \right) - O \left(\frac{1}{n} \right) \right| \\
&= O \left(\frac{1}{n} \right) 
\end{align}

\end{proof}

\end{document}